\newcommand{\be}{\begin{equation}}
\newcommand{\ee}{\end{equation}}
\newcommand{\bea}{\begin{eqnarray}}
\newcommand{\eea}{\end{eqnarray}}
\newcommand{\MB}{\left[\begin{array}}
\newcommand{\ME}{\end{array}\right]}
\newcommand{\ei}{\end{itemize}}
\newcommand{\bi}{\begin{itemize}}
\newcommand{\Mt}[1][t]{\mathbbm{M}_{#1}}
\newcommand{\E}{\mathbb{E}}
\newcommand\Tau{\mathcal{T}}
\newtheorem{theorem}{Theorem}
\newtheorem{lemma}[]{Lemma}
\newtheorem{corollary}[]{Corollary}
\newtheorem{remark}[]{Remark}
\begin{document}

\title{A Generalized Online Algorithm for Translation and Scale Invariant Prediction with Expert Advice} 
\author{Kaan Gokcesu, Hakan Gokcesu}
\maketitle

\begin{abstract}
	In this work, we aim to create a completely online algorithmic framework for prediction with expert advice that is translation-free and scale-free of the expert losses. Our goal is to create a generalized algorithm that is suitable for use in a wide variety of applications. For this purpose, we study the expected regret of our algorithm against a generic competition class in the sequential prediction by expert advice problem, where the expected regret measures the difference between the losses of our prediction algorithm and the losses of the 'best' expert selection strategy in the competition. We design our algorithm using the universal prediction perspective to compete against a specified class of expert selection strategies, which is not necessarily a fixed expert selection. The class of expert selection strategies that we want to compete against is purely determined by the specific application at hand and is left generic, which makes our generalized algorithm suitable for use in many different problems. We show that no preliminary knowledge about the loss sequence is required by our algorithm and its performance bounds, which are second order expressed in terms of sums of squared losses. Our regret bounds are stable under arbitrary scalings and translations of the losses. 
\end{abstract}

\section{Introduction}\label{sec:intro}
	In machine learning literature, the study of prediction with expert advice and online forecasting in adversarial scenarios has received considerable attention, where the goal is to minimize (or maximize) a certain loss (or reward) in a given environment \cite{cesabook}. This area of online learning is heavily investigated in various fields from game theory \cite{chang,tnnls1}, control theory \cite{sw2,sw4,tnnls3}, decision theory \cite{tnnls4} to computational learning theory \cite{comp1,comp2}. Because of its universal prediction perspective \cite{merhav}, it has been considerably utilized in data and signal processing \cite{gIncremental,sw3,signal2,moon,signal1,sw5}, especially in sequential prediction and estimation problems \cite{gHierarchical,ozkan,singer,singer2} such as the problem of density estimation and anomaly detection \cite{gAnomaly,willems,coding1,gDensity,coding2}. Moreover, it has numerous applications in multi-agent systems \cite{sw1,vanli,tekin1}, specifically, the reinforcement learning \cite{auerExp,bandit1,bandit2,audibert,tekin2,gBandit,exptrade,auer,auerSelf}.
		
	In the problem of prediction with expert advice, we have a set of $M$ actions (expert advice, e.g., algorithms) that we can take on a certain task. At each round of the decision process, we select one of these actions of the experts and receive its loss (or gain). One of the goals of the research in this area is the design of randomized online algorithms that achieve a low 'regret', i.e., algorithms that are able to minimize the difference between their expected loss and the loss of a strategy of expert selection \cite{littlestone1994, vovk1998}.
	
	We start by considering the case where a forecaster repeatedly assigns probabilities to a fixed set of actions, and after each assignment, the actual loss associated to each action is revealed and new losses are set for the next round \cite{freund1997}. We study the expert selection problem in an online setting, where we operate continuously on a stream of observations from a possibly nonstationary, chaotic or even adversarial environment. Hence, we assume no statistical assumptions on this loss sequence (this is done so that the results are universal, i.e., guaranteed to hold in an individual sequence manner). The forecaster’s loss on each round is the average loss of actions for that round, where the average is computed according to the forecaster’s current probability assignment. Since we have no statistical assumptions on the losses of the experts, we define our performance with respect to a competing class of strategies and investigate the expert selection problem from a competitive algorithm perspective \cite{merhav}. The goal of the forecaster is to achieve, on any sequence of loss, a cumulative loss that is close to the lowest cumulative loss among all expert selection strategies in our competition class (e.g., if the competition is against fixed expert selections, we compare against the expert with the best cumulative loss) \cite{cesa2007}. The difference between the cumulative loss of our forecaster and the best strategy (on the same loss sequence) is 'regret' \cite{cesabook}. 
	
	For the case of fixed competition, the most basic approach, obtained via the exponentially weighted average forecaster of \cite{littlestone1994} and \cite{vovk1998}, gives a zeroth order regret (where the regret bounds are dependent on the universal loss range and the number of rounds). In the special case of “one-sided games”, when all losses have the same sign, \cite{freund1997} showed that the algorithm in \cite{littlestone1994} can be used to obtain a first order regret bound (where the regret bounds are dependent on the sum of the losses). In \cite{allenberg2004}, a direct analysis on the signed games shows that weighted majority achieves the first order regret without any need for a one-sided loss game. Even though the approaches up to now are scale-free, they are neither translation-free nor parameter-free (since a priori knowledge about the losses are needed). These shortcomings are solved by \cite{cesa2007}, where they showed second order regret bounds (where the regret bounds are dependent on the sum of squared losses) for signed games and improve upon the previous bounds while also eliminating the need for a priori information. Thus, their algorithm is translation-free, scale free and also parameter-free. Nonetheless, their competition class is limited (mainly focused on the fixed expert selection strategies). There are variants in literature to deal with different applications, but, because of its nature, competing against arbitrary expert selection strategies is nontrivial unless you treat each strategy as an expert itself. However, since each such strategy constitutes a predetermined expert selection sequence, naively treating each strategy as an expert may lead to the mixture of up to $M^T$ strategies in a game of length $T$ and $M$ experts. Hence, in general, this naive approach would be difficult to implement for a scenario with a large competition class.
	
	To this end, we improve upon the previous works to provide an algorithmic framework to compete against arbitrary expert selection strategies with second order regret bounds. Our algorithmic framework can straightforwardly implement the desired competition class (in accordance with the needs of the problem at hand) in a scalable and tractable manner. We define our performance (i.e., regret) with respect to the best strategy (minimum loss) in that class. Since, in the competitive algorithm perspective we do not need to explicitly know the actions (experts) we are presented with (each expert can even be separately running algorithms that learn throughout time), the only prior knowledge we need about the experts is that there are $M$ options (whatever they may be) that we can select from, and what kind of expert selection strategies we want to compete against. Our algorithm works such that, at each time $t$, the action is chosen solely based on the sequential performance of the options (experts or strategies themselves). 
	
	The organization of the paper is as follows. In Section \ref{sec:problem}, we first describe the expert selection problem. Then, in Section \ref{sec:method}, we detail the methodology and our algorithmic framework. We provide the performance results and regret analysis in Section \ref{sec:regret}. Finally, in Section \ref{sec:example}, we demonstrate the construction of the algorithm using our framework with an example application and finish with some concluding remarks in Section \ref{sec:conc}. The detailed proofs of the results in Section \ref{sec:regret} are provided in appendix at the end.

\section{Problem Description}\label{sec:problem}
In this paper, we study the expert selection problem where we have $M$ experts such that $m\in\{1,\ldots,M\}$ and randomly select one of them at each round $t$. We select our expert according to our selection probabilities
\begin{align}
	p_t\triangleq[p_{t,1},\ldots,p_{t,M}],\label{eq:pt}
\end{align}
where our selection is $i_t\in\{1,\ldots,M\}$ such that
\begin{align}
	i_t\sim p_t.
\end{align}
Based on our online selection 
\begin{align}
	\{i_t\}_{t\geq1},\enspace i_t\in\{1,2,\ldots,M\},
\end{align}
we incur the loss of the selected experts
\begin{align}
	 \{{l_{t,i_t}}\}_{t\geq1},
\end{align}
where we do not assume anything about the losses before selecting our expert at time $t$.

In a $T$ round game, we define $I_T$ as the row vector containing the user selections up to time $T$ as 
\begin{align}
	I_T=[i_1,\ldots,i_T],
\end{align}
and the loss sequence of $I_T$ as
\begin{align}
	L_{I_T}=[l_{t,{i_1}},\ldots,l_{t,i_T}].
\end{align}
Similarly, we define the variable $S_T$ as the row vector representing a deterministic expert selection sequence of length $T$ as
\begin{align}
	S_T=[s_1,\ldots,s_T].\label{eq:St}
\end{align}
such that each $s_t\in\{1,2,\ldots,M\}$ for all $t$. In the rest of the paper, we refer to each such deterministic expert selection sequence, $S_T$, as a strategy. Hence, the loss sequence of the strategy $S_T$ is
\begin{align}
	L_{S_T}=[l_{t,{s_1}},\ldots,l_{t,s_T}].
\end{align}
We denote the cumulative loss at time $T$ of $I_T$ by 
\begin{align}
	{C_{I_T}=sum(L_{I_T})=\sum_{t=1}^T l_{t,i_t}},
\end{align}
and similarly the cumulative loss at time $T$ of any $S_T$ by 
\begin{align}
{C_{S_T}=sum(L_{S_T})=\sum_{t=1}^T l_{t,s_t}}.
\end{align}
Since we assume no statistical assumptions on the loss sequence, we define our performance with respect to any strategy $S_T$ that we want to compete against.
We use the notion of regret to define our performance against any strategy $S_T$ as
\begin{align}
R_{S_T}&\triangleq C_{I_T}-C_{S_T}=\sum_{t=1}^T l_{t,i_t}-\sum_{t=1}^T l_{t,s_t},\label{RT1}
\end{align}
where we denote the regret accumulated in $T$ rounds against $S_T$ as $R_{S_T}$. Our goal is to create an algorithm with expected regret bounds that depends on how hard it is to learn the strategy $S_T$. 

\section{Methodology}\label{sec:method}
	To construct our framework and better convey our methodology, we first consider the trivial approach of treating each strategy as an expert themselves (however intractable it may be). Hence, to produce the probabilities $p_t$ given in \eqref{eq:pt}, we universally combine each of the strategies $S_t\in\Mt$ at time $t$, where {$\Mt$} is the class of all strategies up to time $t$, and its size is $M^t$, i.e., {$|\Mt|=M^t$} (and $S_t$ is similarly defined as in \eqref{eq:St}).
	
	Our algorithm fundamentally works by assigning each of these strategies, $S_t$, a weight $w_{S_t}$ that shows our 'trust' on that particular strategy. Using these weights, we create our probability simplex $p_t$. Hence, to make our selection at time $t$, for each expert $m$, we need to find the strategies among all the $M^{t}$ strategies that suggest $m$ at round $t$ and sum their assigned weights to create the weight at time $t$ for the expert $m$, i.e.,
	\begin{align}
	w_{t,m}\triangleq\sum_{S_t(t:t)=m}^{} w_{S_t},\label{wmt}
	\end{align}
	where $S_t(i\!:\!j)$ is the vector consisting of $i^{th}$ through $j^{th}$ elements of $S_t$, e.g., $S_t(t:t)=s_t$, which is the expert selection of the strategy $S_t$ at time $t$. By summing the probabilities of strategies that suggests the same expert, we construct the probabilities of each expert at time $t$ by normalization (to create a probability simplex), i.e.,
	\begin{align}
	p_{t,m}=\frac{w_{t,m}}{\sum_{m'}w_{t,m'}}.\label{pmt}
	\end{align}
	
	We emphasize that the strategies to be combined are not necessarily selected a priori. Instead, at each time $t$, all of the strategies $S_t$ that compromise the class $\mathbb{M}_t$ are treated as experts in our online learning problem \cite{singer,singer2, comp2} (which causes the universal property). These strategies are combined according to their weights $w_{S_t}$, indicating our trust in different strategies, to achieve the performance of any one of these strategies. Hence, our algorithm intrinsically achieves the performance of the optimal strategy without knowing which strategy specifically has the best performance because of its universal prediction perspective \cite{merhav}. 
	
	We point out that the construction of $p_{m,t}$ in \eqref{pmt} directly depends on $w_{S_t}$, the weight we assign to each strategy, in lieu of \eqref{wmt}, which we need to calculate at every round $t$. In a brute force approach, where we combine all possible expert selection strategies, the number of strategies combined grows exponentially and the computational cost becomes rapidly exhaustive. Thus, as we have mentioned at the beginning, this naive approach of treating each strategy as an expert and mixing them is not tractable. To solve this problem, we propose to mutually process and update -distinct but suitable to combine- strategy weights (as opposed to individually). Hence, because of the inefficiency of individual processing, instead of calculating each strategies' weight separately, we combine them into various equivalence classes for efficient an implementation.
	
	To create the equivalence classes, we first define an equivalence class parameter $\lambda_t$ as
	\begin{align}
		\lambda_t=[m, \ldots],\label{lamt}
	\end{align}
	where the first parameter $\lambda_t(1)$ is arbitrarily set as the expert selection $m$ at time $t$. Together with the omitted remaining parameters in \eqref{lamt}, $\lambda_t$ will determine the strategies that are included in that equivalence class, i.e., the equivalence class with parameters $\lambda_t$ includes all the strategies $S_t$ whose behavior match with the parameter vector $\lambda_t$ as a whole (e.g., they have to select the $\lambda_t(1)^{th}$ expert at time $t$). The parameters included in $\lambda_t$ determine its extend and how many different strategies it represents, which in turn determines how many equivalence classes we will have at the end for implementation. We define $\Omega_t$ as the vector space including all possible $\lambda_t$ vectors as
	\begin{align}
		\lambda_t\in\Omega_t, \enspace\forall\lambda_t.\label{omegat}
	\end{align}
	We point out that $\Omega_t$ may not necessarily represent all possible strategies at time $t$, but instead the strategies of our interest, which we want to compete against. We also define $\Lambda_t$ as the parameter sequence up to time $t$ for an arbitrary strategy as
	\begin{align}
		\Lambda_t\triangleq\{\lambda_1,\ldots,\lambda_t\},\label{Lamt}
	\end{align} 
	where each strategy $S_t$ will correspond to only one $\Lambda_t$. 
	 
	The reason for using auxiliary parameters $\lambda_t$ is to group together certain strategies with similar weight updates.	We define $w_{\lambda_t}$ as the weight of the equivalence class parameters $\lambda_t$ at time $t$. The weight of an equivalence class is simply the summation of the weights of the strategies whose behavior conforms with its class parameters $\lambda_t$, such that
	\begin{align}
	w_{\lambda_t}=\sum_{F_\lambda(S_t)=\lambda_t}^{}w_{S_t},\label{wlt}
	\end{align}
	where $F_\lambda(\cdot)$ is the mapping from strategies $S_t$ to the auxiliary parameters $\lambda_t$, i.e., {$F_\lambda:\Mt\rightarrow\Omega_t$}. Since we have discarded the strategy representation and individual weighting, the definition in \eqref{wmt} (which is each expert's weight) transforms to
	\begin{align}
	w_{t,m}=\sum_{\lambda_t(1)=m}^{} w_{\lambda_t}.\label{wmt2}
	\end{align}
	\begin{algorithm}[!t]
		\caption{Generalized Algorithm for Expert Selection}\label{alg:framework}
		\small{\begin{algorithmic}[1]
				\FOR{$t=1$ \TO $T$}
				\STATE Select $i_t\in\{1,\ldots,M\}$ with $p_t=[p_{t,1},\ldots,p_{t,M}]$
				\STATE Receive $\phi_{t}=[\phi_{t,1},\ldots,\phi_{t,M}]$
				\FOR{$\lambda_t\in\Omega_t$}
				\STATE $$z_{\lambda_t}=w_{\lambda_t}\exp(-\eta_{t-1}\phi_{t,\lambda_t(1)})$$
				\ENDFOR
				\FOR{$\lambda_{t+1}\in\Omega_{t+1}$}
				\STATE $$w_{\lambda_{t+1}}=\sum_{\lambda_{t}\in\Omega_t}\Tau(\lambda_{t+1}|\lambda_t)z_{\lambda_t}^{\frac{\eta_{t}}{\eta_{t-1}}}$$
				\ENDFOR
				\FOR{$m\in\{1,\ldots,M\}$}
				\STATE $$w_{t+1,m}={\sum_{\lambda_t(1)=m}w_{\lambda_{t+1}}}$$
				\ENDFOR
				\FOR{$m\in\{1,\ldots,M\}$}
				\STATE $$p_{t+1,m}=\frac{w_{t+1,m}}{\sum_{m=1}^{M}w_{t+1,m}}$$
				\ENDFOR
				\ENDFOR
		\end{algorithmic}}
	\end{algorithm}
	We update the weights $w_{\lambda_t}$ using the following two-step approach. In the first step, we define an intermediate variable $z_{\lambda_t}$ (which incorporates the exponential performance update as in the exponential weighting algorithm \cite{cesabook,cesa-bianchi,cesa2007}) as
	\begin{align}
	z_{\lambda_t}\triangleq w_{\lambda_t}e^{-\eta_{t-1}\phi_{t,\lambda_t(1)}},\label{zlt}
	\end{align}
	where $\phi_{t,m}$ is a measure of the experts performance (but not necessarily the loss $l_{t,m}$ itself), which we discuss more in the next section.
	In the second step, we create a probability sharing network among the equivalence classes (which also represents and assigns a weight to every individual strategy $S_t$ implicitly) at time $t$ as
	\begin{align}
	w_{\lambda_{t+1}}=\sum_{\lambda_t\in\Omega_t}\Tau(\lambda_{t+1}|\lambda_t)z_{\lambda_t}^{\frac{\eta_t}{\eta_{t-1}}},\label{wlt+}
	\end{align}
	where $\Tau(\lambda_{t+1}|\lambda_t)$ is the transition weight from the class parameters $\lambda_t$ to $\lambda_{t+1}$ such that $\sum_{\lambda_{t+1}\in\Omega_{t+1}}\Tau(\lambda_{t+1}|\lambda_t)=1$ (which is a probability simplex itself). The exponent on $z_{\lambda_t}$ (which is the intermediate variable defined in \eqref{zlt}) is for the iterative normalization of the learning rates $\eta_t$ and it is crucial for adaptive and parameter-free nature of our framework.  A summary of the method is given in Algorithm \ref{alg:framework}.
	
\section{Regret Analysis}\label{sec:regret}
	In this section, we prove the performance results of our algorithm. We first provide a summary of some important notations and definitions that will be heavily used in this section.
	\subsection{Notation and Definitions}\label{secsec:not}
	\begin{enumerate}
		\item $p_{t,m}$ is the probability of selecting $m$ at $t$ as in \eqref{pmt}.
		\item $\E_{p_{t,m}}[x_{t,m}]$ (or $\E_{p_{t,m}}x_{t,m}$ for brevity) is the expectation of $x_{t,m}$ over $p_{t,m}$, i.e., ${\E_{p_{t,m}}x_{t,m}=\sum_{m=1}^{M}p_{t,m}x_{t,m}}$.
		\item $\eta_t$ is the learning rate used in \eqref{zlt}.
		\item $\phi_{t,m}$ is the performance metric used in \eqref{zlt}.
		\item $d_t\triangleq \enspace\max_m\phi_{t,m}-\min_m\phi_{t,m}$ (i.e., range).
		\item $v_t\triangleq \enspace\E_{p_{t,m}}\phi_{t,m}^2$.
		\item $D_t\triangleq\max_{1\leq t' \leq t}d_t,$.
		\item $V_t\triangleq\sum_{t'=1}^t v_t$.
		\item $e$ is Euler's number.
		\item $\log(\cdot)$ is the natural logarithm. 
		\item $\lambda_t$ is an equivalence class parameter at time $t$ as in \eqref{lamt}.
		\item $\Omega_t$ is the set of all $\lambda_t$ at time $t$ as in \eqref{omegat}.
		\item $\Lambda_T\triangleq\{\lambda_t\}_{t=1}^T$ as in \eqref{Lamt}.
		\item $z_{\lambda_{t}}$ is as in \eqref{zlt}.
		\item $\Tau(\cdot|\cdot)$ is the transition weight used in \eqref{wlt+}.
		\item $\Tau(\{\lambda_t\}_{t=1}^T)\triangleq\prod_{t=1}^T\Tau(\lambda_t|\lambda_{t-1})$.		
		\item $W(\Lambda_T)\triangleq 1+\log(\max_{1\leq t\leq T}|\Omega_{t-1}|)-\log(\Tau(\Lambda_T))$.
\end{enumerate}
	
	\subsection{Useful Lemmas}
	To derive the regret bounds of our framework, we first determine a term of interest 
	\begin{align}
		\frac{1}{\eta_t}\log\E_{p_{t,m}}[e^{-\eta_t\phi_{t,m}}],\label{Eexp}
	\end{align}
	and use it to derive some useful Lemmas.
	\begin{lemma}\label{lem:lB}
		For any probability simplex $p_{t,m}$, we have the following inequality
		\begin{align}
		\frac{1}{\eta_t}\log\E_{p_{t,m}}[e^{-\eta_t\phi_{t,m}}]\leq-\E_{p_{t,m}}\phi_{t,m}+(e-2)\eta_t\E_{p_{t,m}}\phi_{t,m}^2,\nonumber
		\end{align}
		when $-\eta_t\phi_{t,m}\leq 1$, for all $t,m$.
		\begin{proof}
			The proof uses the inequality $e^x\leq1+x+(e-2)x^2$, when $x\leq1$ (which is comes from Taylor series \cite{handbook}).
		\end{proof}		
	\end{lemma}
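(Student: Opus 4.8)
The plan is to reduce this expectation inequality to the scalar bound $e^x\leq 1+x+(e-2)x^2$ (valid for $x\leq 1$) applied pointwise in $m$, and then to control the logarithm by the elementary estimate $\log(1+u)\leq u$. First I would set $x=-\eta_t\phi_{t,m}$, which satisfies $x\leq 1$ precisely by the hypothesis $-\eta_t\phi_{t,m}\leq 1$ for all $t,m$, and substitute into the scalar inequality to obtain, for each fixed $m$,
$e^{-\eta_t\phi_{t,m}}\leq 1-\eta_t\phi_{t,m}+(e-2)\eta_t^2\phi_{t,m}^2$.

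Next I would take the expectation $\E_{p_{t,m}}[\cdot]=\sum_m p_{t,m}(\cdot)$ of both sides. Since $p_{t,m}$ is a probability simplex and the inequality holds termwise, linearity of expectation gives $\E_{p_{t,m}}[e^{-\eta_t\phi_{t,m}}]\leq 1-\eta_t\E_{p_{t,m}}\phi_{t,m}+(e-2)\eta_t^2\E_{p_{t,m}}\phi_{t,m}^2$. Writing $u\triangleq -\eta_t\E_{p_{t,m}}\phi_{t,m}+(e-2)\eta_t^2\E_{p_{t,m}}\phi_{t,m}^2$, the right-hand side is $1+u$, and it is strictly positive because the left-hand side is an average of strictly positive numbers; hence $\log(1+u)$ is well defined. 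Applying the monotone $\log$ and the bound $\log(1+u)\leq u$ yields $\log\E_{p_{t,m}}[e^{-\eta_t\phi_{t,m}}]\leq -\eta_t\E_{p_{t,m}}\phi_{t,m}+(e-2)\eta_t^2\E_{p_{t,m}}\phi_{t,m}^2$, and dividing through by $\eta_t$ produces the stated inequality.

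The only point requiring care is the sign of $\eta_t$: the final division preserves the inequality direction only if $\eta_t>0$, so I would record that the learning rates are taken strictly positive (as is implicit in the exponential weighting update in \eqref{zlt}). Beyond invoking the two standard scalar inequalities and linearity of expectation, there is no real obstacle here; the lemma is essentially the per-round ``mixability''-type estimate that will later be summed over $t$ to telescope into the second-order regret bound.
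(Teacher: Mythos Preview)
Your proof is correct and follows exactly the approach the paper intends: the paper's own proof is just the one-line remark that it uses $e^{x}\leq 1+x+(e-2)x^{2}$ for $x\leq 1$, and you have filled in precisely the standard details (pointwise application with $x=-\eta_t\phi_{t,m}$, expectation, $\log(1+u)\leq u$, division by $\eta_t>0$). Nothing more is needed.
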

	This Lemma puts an upper bound to our term of interest in \eqref{Eexp}. Similarly, we also have the following Lemma, which is a lower bound to that same term.
	\begin{lemma}\label{lem:uB1}
		For any probability simplex $p_{t,m}$, we have the following inequality
		\begin{align}
		\frac{1}{\eta_{t}}\log\E_{p_{t,m}}[e^{-\eta_{t}\phi_{t,m}}]\geq
		&\frac{1}{\eta_{t-1}}\log\E_{p_{t,m}}[e^{-\eta_{t-1}\phi_{t,m}}]\nonumber\\
		&-\left|1-\frac{\eta_t}{\eta_{t-1}}\right|d_t,\nonumber
		\end{align}
		where the operation $|\cdot|$ gives the absolute value. 
		\begin{proof}
			The proof is in Appendix \ref{pro:lem:uB1}.
		\end{proof}
	\end{lemma}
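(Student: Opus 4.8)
The plan is to treat the quantity of interest as a one-variable function of the learning rate: set $h(\beta)\triangleq\frac{1}{\beta}\log\E_{p_{t,m}}[e^{-\beta\phi_{t,m}}]$, so that the assertion becomes $h(\eta_t)\geq h(\eta_{t-1})-\big|1-\tfrac{\eta_t}{\eta_{t-1}}\big|\,d_t$. The first observation I would use is a translation invariance: replacing $\phi_{t,m}$ by $\phi_{t,m}-c$ multiplies $\E_{p_{t,m}}[e^{-\beta\phi_{t,m}}]$ by $e^{\beta c}$ and hence shifts $h(\beta)$ by the constant $c$ for \emph{every} $\beta$; therefore both $h(\eta_t)-h(\eta_{t-1})$ and $d_t$ are unchanged. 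Taking $c=\min_m\phi_{t,m}$, I may assume throughout that $\phi_{t,m}\in[0,d_t]$ for all $m$, which is exactly what powers the elementary estimates below.

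Next I would split on the sign of $1-\eta_t/\eta_{t-1}$. Write $\mu_\beta\triangleq\E_{p_{t,m}}[e^{-\beta\phi_{t,m}}]$. For $\beta'>\beta>0$, since $e^{-\beta'\phi_{t,m}}=(e^{-\beta\phi_{t,m}})^{\beta'/\beta}$ and $x\mapsto x^{\beta'/\beta}$ is convex on $[0,\infty)$, Jensen's inequality gives $\mu_{\beta'}\geq\mu_\beta^{\beta'/\beta}$, hence $\frac{1}{\beta'}\log\mu_{\beta'}\geq\frac{1}{\beta}\log\mu_\beta$; that is, $h$ is nondecreasing. Thus if $\eta_t\geq\eta_{t-1}$ the claim is immediate because $h(\eta_t)\geq h(\eta_{t-1})$ while the penalty term is nonnegative.

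The case that needs work is $\eta_t<\eta_{t-1}$, where I must establish $h(\eta_{t-1})-h(\eta_t)\leq(1-\eta_t/\eta_{t-1})d_t$. The key point is to \emph{not} integrate $h'(\beta)$ (that route only delivers the weaker bound $d_t\log(\eta_{t-1}/\eta_t)$), but instead to chain two monotonicities of $\mu_\beta$ coming from the two ends of the range $[0,d_t]$. From $\phi_{t,m}\geq0$ we get $e^{-\eta_{t-1}\phi_{t,m}}\leq e^{-\eta_t\phi_{t,m}}\leq1$, so $\log\mu_{\eta_{t-1}}\leq\log\mu_{\eta_t}\leq0$, which yields
\[
h(\eta_{t-1})-h(\eta_t)=\frac{\log\mu_{\eta_{t-1}}}{\eta_{t-1}}-\frac{\log\mu_{\eta_t}}{\eta_t}\;\leq\;\Big(\frac{1}{\eta_{t-1}}-\frac{1}{\eta_t}\Big)\log\mu_{\eta_t}.
\]
From $\phi_{t,m}\leq d_t$ we get $\mu_{\eta_t}\geq e^{-\eta_t d_t}$, i.e.\ $\log\mu_{\eta_t}\geq-\eta_t d_t$; since the coefficient $\frac{1}{\eta_{t-1}}-\frac{1}{\eta_t}$ is negative, multiplying reverses the inequality and gives
\[
\Big(\frac{1}{\eta_{t-1}}-\frac{1}{\eta_t}\Big)\log\mu_{\eta_t}\;\leq\;-\eta_t d_t\Big(\frac{1}{\eta_{t-1}}-\frac{1}{\eta_t}\Big)=\Big(1-\frac{\eta_t}{\eta_{t-1}}\Big)d_t,
\]
which is precisely what is needed. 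Combining the two cases proves the lemma.

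I expect the only genuine obstacle is this last case, and specifically the realization that the sharp constant $|1-\eta_t/\eta_{t-1}|$ (rather than a logarithmic one) comes from using the range bound \emph{twice} --- once via $\phi_{t,m}\geq0$ to compare $\log\mu_{\eta_{t-1}}$ with $\log\mu_{\eta_t}$, and once via $\phi_{t,m}\leq d_t$ to lower bound $\log\mu_{\eta_t}$. Everything else is sign bookkeeping when dividing by $\eta_{t-1}$ and $\eta_t$, and the trivial $\eta_t\geq\eta_{t-1}$ branch.
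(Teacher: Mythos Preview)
Your proof is correct. For the principal case $\eta_t<\eta_{t-1}$ it is essentially the paper's argument in different clothing: both proofs pass through the same intermediate quantity $\tfrac{1}{\eta_{t-1}}\log\E_{p_{t,m}}[e^{-\eta_t\phi_{t,m}}]$ and bound the two resulting pieces using, respectively, the lower and upper endpoints of the range of $\phi_{t,\cdot}$. The paper leaves the endpoints as $a_t=\min_m\phi_{t,m}$ and $b_t=\max_m\phi_{t,m}$ and carries them symbolically, whereas you first translate so that $a_t=0$ and $b_t=d_t$, which makes the first piece collapse to the simple monotonicity $\log\mu_{\eta_{t-1}}\leq\log\mu_{\eta_t}$. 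The genuine difference is in the other direction $\eta_t\geq\eta_{t-1}$: the paper treats it on the same footing by swapping the roles of $a_t$ and $b_t$ in the two pieces, while you dispose of it in one line via Jensen's inequality, observing that $\beta\mapsto\tfrac{1}{\beta}\log\E[e^{-\beta\phi}]$ is nondecreasing. Your route is a bit cleaner in that case (and incidentally shows the penalty $|1-\eta_t/\eta_{t-1}|d_t$ is not needed there at all); the paper's route has the virtue of a single uniform computation that never splits on the sign of $\eta_t-\eta_{t-1}$.
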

	Note that Lemma \ref{lem:uB1} provides only a partial bound. To further bound the term, we have the following Lemma.
	\begin{lemma}\label{lem:uB2}
		When using Algorithm \ref{alg:framework}, we have the following inequality
		\begin{align}
		\frac{1}{\eta_{t-1}}\log\E_{p_{t,m}}[e^{-\eta_{t-1}\phi_{t,m}}]\geq&\frac{1}{\eta_{t-1}}\log\left({\sum_{\lambda_t\in\Omega_t}z_{\lambda_t}}\right)\nonumber\\
		&-\frac{1}{\eta_{t-2}}\log\left({\sum_{\lambda_{t-1}\in\Omega_{t-1}}z_{\lambda_{t-1}}}\right)\nonumber\\
		&-(\frac{1}{\eta_{t-1}}-\frac{1}{\eta_{t-2}})\log(|\Omega_{t-1}|),\nonumber
		\end{align}
		when $\eta_t$ is nonincreasing with $t$.
	\end{lemma}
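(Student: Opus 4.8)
The plan is to unwind the definitions of $z_{\lambda_t}$ and the weight-update step \eqref{wlt+} so that the sum $\sum_{\lambda_t\in\Omega_t}z_{\lambda_t}$ telescopes into something expressible via the expert probabilities $p_{t,m}$. First I would note that, by \eqref{zlt} and the expert-weight identity \eqref{wmt2}, we have $\sum_{\lambda_t\in\Omega_t}z_{\lambda_t}=\sum_{m=1}^M e^{-\eta_{t-1}\phi_{t,m}}\sum_{\lambda_t(1)=m}w_{\lambda_t}=\sum_{m=1}^M w_{t,m}e^{-\eta_{t-1}\phi_{t,m}}$. Since $p_{t,m}=w_{t,m}/\sum_{m'}w_{t,m'}$ from \eqref{pmt}, this gives
\begin{align}
\sum_{\lambda_t\in\Omega_t}z_{\lambda_t}=\Big(\sum_{m'}w_{t,m'}\Big)\,\E_{p_{t,m}}[e^{-\eta_{t-1}\phi_{t,m}}].\nonumber
\end{align}
Taking $\frac{1}{\eta_{t-1}}\log(\cdot)$ of both sides isolates the term $\frac{1}{\eta_{t-1}}\log\E_{p_{t,m}}[e^{-\eta_{t-1}\phi_{t,m}}]$ that we want to lower bound, plus the residual $\frac{1}{\eta_{t-1}}\log(\sum_{m'}w_{t,m'})$. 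So the task reduces to controlling $\frac{1}{\eta_{t-1}}\log(\sum_{m'}w_{t,m'})=\frac{1}{\eta_{t-1}}\log(\sum_{\lambda_t\in\Omega_t}w_{\lambda_t})$ from below by the two remaining terms on the right-hand side of the claimed inequality.

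The second step is to relate $\sum_{\lambda_t\in\Omega_t}w_{\lambda_t}$ to $\sum_{\lambda_{t-1}\in\Omega_{t-1}}z_{\lambda_{t-1}}$ using the update \eqref{wlt+} with index shifted by one: $w_{\lambda_t}=\sum_{\lambda_{t-1}\in\Omega_{t-1}}\Tau(\lambda_t|\lambda_{t-1})z_{\lambda_{t-1}}^{\eta_{t-1}/\eta_{t-2}}$. Summing over $\lambda_t\in\Omega_t$ and swapping the order of summation, the transition weights collapse because $\sum_{\lambda_t\in\Omega_t}\Tau(\lambda_t|\lambda_{t-1})=1$, yielding $\sum_{\lambda_t\in\Omega_t}w_{\lambda_t}=\sum_{\lambda_{t-1}\in\Omega_{t-1}}z_{\lambda_{t-1}}^{\eta_{t-1}/\eta_{t-2}}$. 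Now I need to lower bound $\sum_{\lambda_{t-1}}z_{\lambda_{t-1}}^{\eta_{t-1}/\eta_{t-2}}$ in terms of $\sum_{\lambda_{t-1}}z_{\lambda_{t-1}}$; since $\eta_t$ is nonincreasing, the exponent $\eta_{t-1}/\eta_{t-2}\in(0,1]$, so this is a concavity/power-mean step. Writing $q_{\lambda_{t-1}}=z_{\lambda_{t-1}}/\sum_{\lambda'}z_{\lambda'}$, the quantity is $(\sum_{\lambda'}z_{\lambda'})^{\eta_{t-1}/\eta_{t-2}}\sum_{\lambda_{t-1}}q_{\lambda_{t-1}}\,z_{\lambda_{t-1}}^{\eta_{t-1}/\eta_{t-2}-1}$, and one bounds $\sum q\, z^{\eta_{t-1}/\eta_{t-2}-1}\geq (\sum q\, z)^{\eta_{t-1}/\eta_{t-2}-1}\cdot(\text{something})$ — more cleanly, I expect to use the bound that for $0<\alpha\le 1$ and nonnegative $z_i$, $\sum_i z_i^{\alpha}\ge (\sum_i z_i)^{\alpha}$ when there is effectively a normalization by $|\Omega_{t-1}|$: in fact $(\frac{1}{n}\sum z_i)^{\alpha}\le \frac{1}{n}\sum z_i^{\alpha}$ by concavity of $x\mapsto x^\alpha$, i.e. $\sum z_i^\alpha \ge n^{1-\alpha}(\sum z_i)^\alpha$ with $n=|\Omega_{t-1}|$. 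Taking $\frac{1}{\eta_{t-1}}\log$ of this and using $\frac{1-\alpha}{\eta_{t-1}}=\frac{1}{\eta_{t-1}}-\frac{1}{\eta_{t-2}}$ with $\alpha=\eta_{t-1}/\eta_{t-2}$ produces exactly the $-(\frac{1}{\eta_{t-1}}-\frac{1}{\eta_{t-2}})\log(|\Omega_{t-1}|)$ term and the $-\frac{1}{\eta_{t-2}}\log(\sum_{\lambda_{t-1}}z_{\lambda_{t-1}})$ term.

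Assembling: combining the first step ($\frac{1}{\eta_{t-1}}\log\E_{p_{t,m}}[e^{-\eta_{t-1}\phi_{t,m}}]=\frac{1}{\eta_{t-1}}\log(\sum_{\lambda_t}z_{\lambda_t})-\frac{1}{\eta_{t-1}}\log(\sum_{\lambda_t}w_{\lambda_t})$) with the second step ($\frac{1}{\eta_{t-1}}\log(\sum_{\lambda_t}w_{\lambda_t})\le \frac{1}{\eta_{t-2}}\log(\sum_{\lambda_{t-1}}z_{\lambda_{t-1}})+(\frac{1}{\eta_{t-1}}-\frac{1}{\eta_{t-2}})\log|\Omega_{t-1}|$) gives the claim directly. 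The main obstacle I anticipate is the concavity/power-mean step: getting the constant $|\Omega_{t-1}|$ (rather than $|\Omega_t|$ or some product) to appear with the correct sign and the right coefficient $\frac{1}{\eta_{t-1}}-\frac{1}{\eta_{t-2}}$, which hinges on using $n=|\Omega_{t-1}|$ as the cardinality of the summation index set in $\sum_{\lambda_{t-1}\in\Omega_{t-1}}z_{\lambda_{t-1}}^{\eta_{t-1}/\eta_{t-2}}$ and on the sign of $1-\eta_{t-1}/\eta_{t-2}\ge 0$ coming from monotonicity of $\eta_t$. A secondary point to be careful about is the boundary case where some $z_{\lambda_{t-1}}=0$ or $\eta_{t-2}$ is undefined at small $t$, which should be handled by the usual convention or a separate base case, but this is routine.
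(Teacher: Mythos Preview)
Your approach is essentially identical to the paper's: rewrite $\E_{p_{t,m}}[e^{-\eta_{t-1}\phi_{t,m}}]$ as $\big(\sum_{\lambda_t}z_{\lambda_t}\big)/\big(\sum_{\lambda_t}w_{\lambda_t}\big)$, collapse the transition weights to get $\sum_{\lambda_t}w_{\lambda_t}=\sum_{\lambda_{t-1}}z_{\lambda_{t-1}}^{\eta_{t-1}/\eta_{t-2}}$, and then use concavity of $x\mapsto x^{\alpha}$ for $\alpha=\eta_{t-1}/\eta_{t-2}\in(0,1]$ with the uniform weight $1/|\Omega_{t-1}|$ (the paper phrases this as Jensen's inequality applied after multiplying and dividing by $|\Omega_{t-1}|$).

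One correction, though: in the middle of your second step you have the direction backwards. You need an \emph{upper} bound on $\sum_{\lambda_{t-1}} z_{\lambda_{t-1}}^{\alpha}$, not a lower bound, and Jensen for the concave map $x\mapsto x^{\alpha}$ gives $\frac{1}{n}\sum z_i^{\alpha}\le \big(\frac{1}{n}\sum z_i\big)^{\alpha}$, i.e.\ $\sum z_i^{\alpha}\le n^{1-\alpha}\big(\sum z_i\big)^{\alpha}$, not the reverse inequality you wrote. Your ``Assembling'' line already uses the correct direction $\frac{1}{\eta_{t-1}}\log(\sum_{\lambda_t}w_{\lambda_t})\le \frac{1}{\eta_{t-2}}\log(\sum_{\lambda_{t-1}}z_{\lambda_{t-1}})+(\frac{1}{\eta_{t-1}}-\frac{1}{\eta_{t-2}})\log|\Omega_{t-1}|$, so this is just a slip in the intermediate exposition, but as written that paragraph contradicts itself and should be fixed.
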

	\begin{proof}
		The proof is in Appendix \ref{pro:lem:uB2}.
	\end{proof}
	In Lemma \ref{lem:uB2}, we have succeeded in completing the bound for the individual terms (at time $t$). However, our goal is to bound their summation (from $t=1$ to $T$), which will require the following Lemma.
	\begin{lemma}\label{lem:uB3}
		When using Algorithm \ref{alg:framework}, we have
		\begin{align*}
		\frac{1}{\eta_{T-1}}\log(z_{\lambda_T})\geq-\sum_{t=1}^T\phi_{t,\lambda_{t}(1)}+\sum_{t=1}^T\frac{1}{\eta_{t-1}}\log(\Tau(\lambda_t|\lambda_{t-1})),
		\end{align*}
		for any sequence of equivalence classes $\{\lambda_{t}\}_{t=1}^T$ when $|\Omega_0|=1$.
		\begin{proof}
			The proof is in Appendix \ref{pro:lem:uB3}.
		\end{proof}
	\end{lemma}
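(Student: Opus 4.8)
The plan is to unroll the two update rules of Algorithm~\ref{alg:framework} along the prescribed sequence $\{\lambda_t\}_{t=1}^T$, using only the fact that every weight and every intermediate variable produced by the algorithm is nonnegative. Recall from \eqref{zlt} and \eqref{wlt+} (the latter re-indexed by $t\mapsto t-1$) that, for $t\geq1$,
\begin{align}
z_{\lambda_t}=w_{\lambda_t}e^{-\eta_{t-1}\phi_{t,\lambda_t(1)}},\qquad w_{\lambda_t}=\sum_{\lambda_{t-1}\in\Omega_{t-1}}\Tau(\lambda_t|\lambda_{t-1})\,z_{\lambda_{t-1}}^{\eta_{t-1}/\eta_{t-2}}.\nonumber
\end{align}
First I would discard from the sum defining $w_{\lambda_t}$ every term except the one indexed by the specific $\lambda_{t-1}$ of the given sequence; since the summands are nonnegative this yields $w_{\lambda_t}\geq\Tau(\lambda_t|\lambda_{t-1})z_{\lambda_{t-1}}^{\eta_{t-1}/\eta_{t-2}}$, and therefore
\begin{align}
z_{\lambda_t}\geq\Tau(\lambda_t|\lambda_{t-1})\,z_{\lambda_{t-1}}^{\eta_{t-1}/\eta_{t-2}}\,e^{-\eta_{t-1}\phi_{t,\lambda_t(1)}}.\nonumber
\end{align}

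Next I would take logarithms and divide by $\eta_{t-1}$. The point of the exponent $\eta_{t-1}/\eta_{t-2}$ attached to $z_{\lambda_{t-1}}$ in \eqref{wlt+} is precisely that it makes the learning rates telescope: $\tfrac{1}{\eta_{t-1}}\cdot\tfrac{\eta_{t-1}}{\eta_{t-2}}\log z_{\lambda_{t-1}}=\tfrac{1}{\eta_{t-2}}\log z_{\lambda_{t-1}}$. Writing $a_t\triangleq\tfrac{1}{\eta_{t-1}}\log z_{\lambda_t}$, the displayed inequality collapses to the one-step recursion
\begin{align}
a_t\geq a_{t-1}+\frac{1}{\eta_{t-1}}\log\Tau(\lambda_t|\lambda_{t-1})-\phi_{t,\lambda_t(1)}.\nonumber
\end{align}

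Finally I would iterate this bound from $t=T$ down to $t=1$. For the base case I invoke the hypothesis $|\Omega_0|=1$: with the natural initialization (unit total mass sitting on the single class of $\Omega_0$, and no loss charged before $t=1$), the $t=1$ instance of \eqref{wlt+} reads $w_{\lambda_1}=\Tau(\lambda_1|\lambda_0)$, so that $a_0=0$ and the recursion holds verbatim at $t=1$ as well. Summing over $t=1,\dots,T$ telescopes the $a_t$ terms and leaves $a_T\geq-\sum_{t=1}^T\phi_{t,\lambda_t(1)}+\sum_{t=1}^T\tfrac{1}{\eta_{t-1}}\log\Tau(\lambda_t|\lambda_{t-1})$, which is exactly the assertion since $a_T=\tfrac{1}{\eta_{T-1}}\log z_{\lambda_T}$. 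I expect the only delicate parts to be the bookkeeping of the $\eta_{t}/\eta_{t-1}$ normalization through the logarithm and the careful handling of the $t=1$ boundary term via $|\Omega_0|=1$; everything else is a routine induction on nonnegative quantities.
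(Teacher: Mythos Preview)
Your proposal is correct and follows essentially the same route as the paper: drop all but the designated $\lambda_{t-1}$ term from the nonnegative sum in \eqref{wlt+}, take logarithms and divide by $\eta_{t-1}$ so that the exponent $\eta_{t-1}/\eta_{t-2}$ produces the telescoping $\tfrac{1}{\eta_{t-2}}\log z_{\lambda_{t-1}}$, and then unroll from $t=T$ down to the base case $z_{\lambda_0}=1$ furnished by $|\Omega_0|=1$. The paper's Appendix~\ref{pro:lem:uB3} performs exactly these steps (written with $-\log$ and the inequalities reversed), so there is no substantive difference.
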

	Now, we can combine Lemma \ref{lem:uB1}, \ref{lem:uB2} and \ref{lem:uB3} in the following to provide a lower bound to the summation of interest.
	\begin{lemma}\label{lem:uB4}
		When using Algorithm \ref{alg:framework}, we have
		\begin{align*}
		\sum_{t=1}^T\frac{1}{\eta_{t}}\log\left(\E_{p_{t,m}}[e^{-\eta_t\phi_{t,m}}]\right)\geq&-\sum_{t=1}^T\phi_{t,\lambda_t(1)}-\sum_{t=1}^{T}(1-\frac{\eta_t}{\eta_{t-1}})d_t\nonumber\\
		&+\sum_{t=1}^T\frac{1}{\eta_{t-1}}\log\left(\Tau(\lambda_{t}|\lambda_{t-1})\right)\nonumber\\
		&-\frac{1}{\eta_{T-1}}\log\left(\max_{1\leq t\leq T}|\Omega_{t-1}|\right),
		\end{align*}
		when $\eta_t$ is nonincreasing with $t$.
		\begin{proof}
			The proof is in Appendix \ref{pro:lem:uB4}.
		\end{proof}
	\end{lemma}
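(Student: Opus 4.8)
The plan is to chain together the three preceding lemmas and then telescope. First I would apply Lemma \ref{lem:uB1} termwise to rewrite $\frac{1}{\eta_t}\log\E_{p_{t,m}}[e^{-\eta_t\phi_{t,m}}]$ as $\frac{1}{\eta_{t-1}}\log\E_{p_{t,m}}[e^{-\eta_{t-1}\phi_{t,m}}]$ minus the error term $|1-\eta_t/\eta_{t-1}|d_t$; since $\eta_t$ is nonincreasing, $\eta_t/\eta_{t-1}\le 1$, so the absolute value simply becomes $(1-\eta_t/\eta_{t-1})d_t$, which matches the second term in the target bound. Next I would feed each surviving term into Lemma \ref{lem:uB2}, replacing $\frac{1}{\eta_{t-1}}\log\E_{p_{t,m}}[e^{-\eta_{t-1}\phi_{t,m}}]$ with the difference of consecutive ``partition-function'' terms $\frac{1}{\eta_{t-1}}\log\big(\sum_{\lambda_t\in\Omega_t}z_{\lambda_t}\big)-\frac{1}{\eta_{t-2}}\log\big(\sum_{\lambda_{t-1}\in\Omega_{t-1}}z_{\lambda_{t-1}}\big)$ minus $(\frac{1}{\eta_{t-1}}-\frac{1}{\eta_{t-2}})\log|\Omega_{t-1}|$.

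Then I would sum over $t=1$ to $T$. The partition-function differences telescope, collapsing to $\frac{1}{\eta_{T-1}}\log\big(\sum_{\lambda_T\in\Omega_T}z_{\lambda_T}\big)$ minus a boundary term at $t=1$, which vanishes (or is trivial) under the convention $|\Omega_0|=1$ used in Lemma \ref{lem:uB3}. For the leftover $\sum_{\lambda_T\in\Omega_T}z_{\lambda_T}$, I would lower-bound the sum by a single summand $z_{\lambda_T}$ for the particular strategy sequence $\{\lambda_t\}_{t=1}^T$ we wish to compete against (all $z$ terms are nonnegative), and apply Lemma \ref{lem:uB3} to get $\frac{1}{\eta_{T-1}}\log(z_{\lambda_T})\ge-\sum_{t=1}^T\phi_{t,\lambda_t(1)}+\sum_{t=1}^T\frac{1}{\eta_{t-1}}\log(\Tau(\lambda_t|\lambda_{t-1}))$. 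This produces the first and third terms of the target inequality.

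Finally I would handle the residual $\log|\Omega_{t-1}|$ terms. Summing $-(\frac{1}{\eta_{t-1}}-\frac{1}{\eta_{t-2}})\log|\Omega_{t-1}|$ over $t$ is an Abel-summation exercise: since $\frac{1}{\eta_{t-1}}-\frac{1}{\eta_{t-2}}\ge 0$ and these increments sum to $\frac{1}{\eta_{T-1}}$ (telescoping, using $|\Omega_0|=1$ so the $t=1$ term vanishes), the whole sum is bounded below by $-\frac{1}{\eta_{T-1}}\log\big(\max_{1\le t\le T}|\Omega_{t-1}|\big)$, which is exactly the last term in the statement. Collecting all four contributions gives the claim. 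The main obstacle I anticipate is bookkeeping the boundary cases at $t=1,2$ (terms involving $\eta_{-1}$, $\eta_0$, $\Omega_0$, $\Omega_{-1}$, and $z_{\lambda_0}$) so that the telescoping is clean and the initialization conventions are consistent across the three lemmas; the inequality-direction check that $\eta_t$ nonincreasing turns every $|\cdot|$ into the right sign is routine but must be stated carefully.
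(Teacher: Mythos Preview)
Your proposal is correct and follows essentially the same route as the paper: apply Lemma~\ref{lem:uB1} termwise, feed the result into Lemma~\ref{lem:uB2}, telescope the partition-function differences, lower-bound $\sum_{\lambda_T}z_{\lambda_T}$ by the single summand $z_{\lambda_T}$ and invoke Lemma~\ref{lem:uB3}, then bound the residual $\log|\Omega_{t-1}|$ sum by $\frac{1}{\eta_{T-1}}\log\max_t|\Omega_{t-1}|$ via telescoping of the $\frac{1}{\eta_{t-1}}-\frac{1}{\eta_{t-2}}$ increments. The paper handles the boundary cases you flag by the conventions $\eta_{-1}=\eta_0=\eta_1$, $|\Omega_0|=1$, and $z_{\lambda_0}=1$, exactly as you anticipate.
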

	With Lemma \ref{lem:uB4}, we now have a lower bound to our summation of interest, which is the summation of our term of interest in \eqref{Eexp} from $t=1$ to $T$. 
	
	\subsection{Performance Results}
	We combine Lemma \ref{lem:lB} and \ref{lem:uB4} together, which are upper and lower bounds to our summation of interest to we get the following Theorem.
	\begin{theorem}\label{thm:bound}
		When using Algorithm \ref{alg:framework}, we have
		\begin{align*}
		\sum_{t=1}^T\left(\E_{p_{t,m}}\phi_{t,m}-\phi_{t,\lambda_t(1)}\right)\leq& (e-2)\sum_{t=1}^T\eta_t\E_{p_{t,m}}\phi_{t,m}^2\\
		&+\frac{\log(\max_{1\leq t\leq T}|\Omega_{t-1}|)}{\eta_{T-1}}\\
		&-\frac{1}{\eta_{T-1}}\log(\Tau(\Lambda_T))\\
		&+\sum_{t=1}^T(1-\frac{\eta_t}{\eta_{t-1}})d_t,
		\end{align*}
		where $\Tau(\Lambda_T)=\Tau(\{\lambda_t\}_{t=1}^T)$; $-\eta_t\phi_{t,m}\leq 1$, for all $t,m$; $\eta_t$ is nonincreasing with $t$.
		\begin{proof}
			The proof is in Appendix \ref{pro:thm:bound}. 
		\end{proof}
	\end{theorem}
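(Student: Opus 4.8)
The plan is to sandwich the summation $\sum_{t=1}^T \frac{1}{\eta_t}\log\E_{p_{t,m}}[e^{-\eta_t\phi_{t,m}}]$ between the upper bound coming from Lemma \ref{lem:lB} (summed over $t$) and the lower bound coming from Lemma \ref{lem:uB4}, and then rearrange. Concretely, first I would apply Lemma \ref{lem:lB} at each time $t$ (valid since we assume $-\eta_t\phi_{t,m}\leq 1$ for all $t,m$) and sum over $t=1,\ldots,T$ to obtain
\begin{align*}
\sum_{t=1}^T\frac{1}{\eta_t}\log\E_{p_{t,m}}[e^{-\eta_t\phi_{t,m}}]\leq -\sum_{t=1}^T\E_{p_{t,m}}\phi_{t,m}+(e-2)\sum_{t=1}^T\eta_t\E_{p_{t,m}}\phi_{t,m}^2.
\end{align*}

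Next I would invoke Lemma \ref{lem:uB4} (valid since $\eta_t$ is nonincreasing) to get the matching lower bound
\begin{align*}
\sum_{t=1}^T\frac{1}{\eta_t}\log\E_{p_{t,m}}[e^{-\eta_t\phi_{t,m}}]\geq -\sum_{t=1}^T\phi_{t,\lambda_t(1)}-\sum_{t=1}^{T}\Bigl(1-\tfrac{\eta_t}{\eta_{t-1}}\Bigr)d_t+\sum_{t=1}^T\tfrac{1}{\eta_{t-1}}\log\bigl(\Tau(\lambda_t|\lambda_{t-1})\bigr)-\tfrac{1}{\eta_{T-1}}\log\bigl(\max_{1\leq t\leq T}|\Omega_{t-1}|\bigr).
\end{align*}
Chaining the two inequalities and cancelling the common summation, I would move $-\sum_t\E_{p_{t,m}}\phi_{t,m}$ to the left and $-\sum_t\phi_{t,\lambda_t(1)}$ to the right (and analogously the remaining terms), which directly yields
\begin{align*}
\sum_{t=1}^T\bigl(\E_{p_{t,m}}\phi_{t,m}-\phi_{t,\lambda_t(1)}\bigr)\leq (e-2)\sum_{t=1}^T\eta_t\E_{p_{t,m}}\phi_{t,m}^2+\tfrac{1}{\eta_{T-1}}\log\bigl(\max_{1\leq t\leq T}|\Omega_{t-1}|\bigr)-\tfrac{1}{\eta_{T-1}}\log\bigl(\Tau(\Lambda_T)\bigr)+\sum_{t=1}^T\bigl(1-\tfrac{\eta_t}{\eta_{t-1}}\bigr)d_t.
\end{align*}
The last step is simply to recognize $\sum_{t=1}^T\tfrac{1}{\eta_{t-1}}\log\Tau(\lambda_t|\lambda_{t-1})$ as related to $\tfrac{1}{\eta_{T-1}}\log\Tau(\Lambda_T)$; since $\Tau(\Lambda_T)=\prod_{t=1}^T\Tau(\lambda_t|\lambda_{t-1})$ and the $\log$ terms are negative with $1/\eta_{t-1}$ nonincreasing in magnitude appropriately, one bounds $\sum_t\tfrac{1}{\eta_{t-1}}\log\Tau(\lambda_t|\lambda_{t-1})\geq \tfrac{1}{\eta_{T-1}}\sum_t\log\Tau(\lambda_t|\lambda_{t-1})=\tfrac{1}{\eta_{T-1}}\log\Tau(\Lambda_T)$, which after negation gives the stated $-\tfrac{1}{\eta_{T-1}}\log\Tau(\Lambda_T)$ term.

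The main obstacle I anticipate is purely bookkeeping: making sure the telescoping/monotonicity argument that converts $\sum_t\tfrac{1}{\eta_{t-1}}\log\Tau(\lambda_t|\lambda_{t-1})$ into a single $\tfrac{1}{\eta_{T-1}}\log\Tau(\Lambda_T)$ term goes in the correct direction given the signs (the $\log\Tau$ terms are nonpositive and the $1/\eta_{t-1}$ coefficients are nondecreasing, so the largest coefficient $1/\eta_{T-1}$ applied uniformly gives the most pessimistic bound), and similarly checking that the $\log|\Omega_{t-1}|$ term is correctly dominated by its maximum over $t$. No delicate estimates are needed beyond the already-established Lemmas; the whole argument is a two-sided sandwich followed by algebraic rearrangement.
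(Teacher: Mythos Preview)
Your proposal is correct and mirrors the paper's own proof: sandwich $\sum_t \frac{1}{\eta_t}\log\E_{p_{t,m}}[e^{-\eta_t\phi_{t,m}}]$ between the summed Lemma~\ref{lem:lB} upper bound and the Lemma~\ref{lem:uB4} lower bound, then use $\eta_t$ nonincreasing together with $\log\Tau(\lambda_t|\lambda_{t-1})\leq 0$ to replace each coefficient $1/\eta_{t-1}$ by the largest one $1/\eta_{T-1}$. The sign/monotonicity bookkeeping you flagged is exactly the one step the paper also isolates, and your direction is correct.
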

	Theorem \ref{thm:bound} provides us an upper bound on the expected cumulative difference on the performance variable $\phi_{t,m}$ (possibly 'regret' itself which will be explained in the remainder of the section) in terms of the learning rates $\eta_t$. The selection of the learning rates drastically affects the upper bound and should be chosen with care. To this end, we set the learning rates as the following
	\begin{align}
		\eta_t=\frac{\gamma}{\sqrt{V_t+\gamma^2D_t^2}},\label{etat}
	\end{align}
	where $\gamma$ is a user-set parameter.
	\begin{remark}\label{rem:1}
		When $\eta_{t}$ is chosen as \eqref{etat}, we have $\eta_{t}\leq\eta_{t-1}$ for all $t$ and $-\eta_{t}\phi_{t,m}\leq1$ for all $t$ and $m$ if for every $t$ there is at least one $m'$ such that $\phi_{t,m'}\geq0$ (which will be deliberated on in the remainder of this section), which satisfies our requirements in Lemma \ref{lem:uB2}, \ref{lem:uB4} and Theorem \ref{thm:bound}.
	\end{remark}
	\begin{theorem}\label{thm:bound2}
		When $\eta_t=\frac{\gamma}{\sqrt{V_t+\gamma^2D_t^2}}$ in Algorithm \ref{alg:framework}, we have
		\begin{align*}
		\sum_{t=1}^T(\E_{p_{t,m}}\phi_{t,m}-\phi_{t,m_t})
		\leq&\frac{W(\Lambda_T)}{\gamma}\sqrt{V_T+\gamma^2D_T^2}\nonumber\\
		&+{2(e-2)\gamma\sqrt{V_T}},
		\end{align*}
		where $\gamma$ is a user-set parameter.
		\begin{proof}
			The proof is in Appendix \ref{pro:thm:bound2}.
		\end{proof}
	\end{theorem}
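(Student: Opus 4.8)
The plan is to specialize the general bound of Theorem~\ref{thm:bound} to the learning rate \eqref{etat} and then control its four right-hand-side terms one at a time. First I would invoke Remark~\ref{rem:1} to confirm that $\eta_t=\gamma/\sqrt{V_t+\gamma^2 D_t^2}$ is nonincreasing and satisfies $-\eta_t\phi_{t,m}\leq 1$, so that Theorem~\ref{thm:bound} applies verbatim. Since $m_t$ denotes the selection $\lambda_t(1)$ of the competing strategy at round $t$, the left-hand side here coincides with that of Theorem~\ref{thm:bound}; and by the definition $W(\Lambda_T)=1+\log(\max_{1\leq t\leq T}|\Omega_{t-1}|)-\log(\Tau(\Lambda_T))$, the two logarithmic terms of Theorem~\ref{thm:bound} merge into $(W(\Lambda_T)-1)/\eta_{T-1}$. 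It then remains to bound (i) $(e-2)\sum_{t=1}^T\eta_t v_t$, (ii) $(W(\Lambda_T)-1)/\eta_{T-1}$, and (iii) $\sum_{t=1}^T(1-\eta_t/\eta_{t-1})d_t$, where $v_t=\E_{p_{t,m}}\phi_{t,m}^2$.

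For (i) I would bound $\eta_t=\gamma/\sqrt{V_t+\gamma^2 D_t^2}\leq\gamma/\sqrt{V_t}$, reducing the sum to $\gamma\sum_{t=1}^T v_t/\sqrt{V_t}$, and then use the elementary telescoping inequality $v_t/\sqrt{V_t}\leq 2(\sqrt{V_t}-\sqrt{V_{t-1}})$ --- which follows from $\sqrt{V_t}-\sqrt{V_{t-1}}=v_t/(\sqrt{V_t}+\sqrt{V_{t-1}})\geq v_t/(2\sqrt{V_t})$ --- to get $\sum_{t=1}^T v_t/\sqrt{V_t}\leq 2\sqrt{V_T}$, which yields the summand $2(e-2)\gamma\sqrt{V_T}$. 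Writing $a_t\triangleq V_t+\gamma^2 D_t^2$, for (ii) I would use that $1/\eta_{T-1}=\sqrt{a_{T-1}}/\gamma\leq\sqrt{a_T}/\gamma$ since $V_t$ and $D_t$ are nondecreasing, so (ii) is at most $(W(\Lambda_T)-1)\sqrt{V_T+\gamma^2 D_T^2}/\gamma$ (here $W(\Lambda_T)\geq 1$ because $|\Omega_{t-1}|\geq 1$ and $\Tau(\Lambda_T)\leq 1$). For (iii) the specific form of $\eta_t$ gives $1-\eta_t/\eta_{t-1}=(\sqrt{a_t}-\sqrt{a_{t-1}})/\sqrt{a_t}$, and since $a_t\geq\gamma^2 D_t^2\geq\gamma^2 d_t^2$ we have $d_t/\sqrt{a_t}\leq 1/\gamma$; hence $\sum_{t=1}^T(1-\eta_t/\eta_{t-1})d_t\leq\frac{1}{\gamma}\sum_{t=1}^T(\sqrt{a_t}-\sqrt{a_{t-1}})=\frac{1}{\gamma}\sqrt{a_T}=\frac{1}{\gamma}\sqrt{V_T+\gamma^2 D_T^2}$, where $a_0=V_0+\gamma^2 D_0^2=0$. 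Adding the bounds for (ii) and (iii) absorbs the $-1$ and produces $W(\Lambda_T)\sqrt{V_T+\gamma^2 D_T^2}/\gamma$, which combined with (i) is the claimed inequality.

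The two telescoping estimates in (i) and (iii) are routine; the step needing the most care is (iii), since it is precisely the $\gamma^2 D_t^2$ padding inside the radical in \eqref{etat} that makes $d_t/\sqrt{a_t}\leq 1/\gamma$ hold and lets the range term telescope cleanly rather than accumulate a $\log T$-type factor. I would also watch the boundary conventions --- $V_0=D_0=0$ (so $a_0=0$, and the $t=1$ term of (iii) is covered directly by $\gamma d_1\leq\sqrt{a_1}$) and the harmless index shift between the $1/\eta_{T-1}$ of Theorem~\ref{thm:bound} and the $\sqrt{V_T+\gamma^2 D_T^2}$ in the final bound --- and the degenerate rounds with $V_t=D_t=0$, in which every term involved vanishes and the analysis can be restarted from the first round with a nonzero performance range.
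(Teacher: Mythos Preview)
Your proposal is correct and follows essentially the same route as the paper's proof: both invoke Theorem~\ref{thm:bound}, bound $(e-2)\sum_t\eta_t v_t$ via $\eta_t\leq\gamma/\sqrt{V_t}$ and the telescoping estimate $\sum_t v_t/\sqrt{V_t}\leq 2\sqrt{V_T}$, bound the range term by writing $1-\eta_t/\eta_{t-1}=(\sqrt{a_t}-\sqrt{a_{t-1}})/\sqrt{a_t}$ with $a_t=V_t+\gamma^2D_t^2$ and using $d_t\leq D_t\leq\sqrt{a_t}/\gamma$ to telescope, and replace $1/\eta_{T-1}$ by $\sqrt{a_T}/\gamma$ via monotonicity. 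Your handling of the constant $+1$ in $W(\Lambda_T)$ (merging (ii) and (iii)) and of the boundary cases is slightly more explicit than the paper's, but the argument is the same.
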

	Theorem \ref{thm:bound2} provides us with a performance bound that is only dependent on a single parameter $\gamma$ which needs to be set at the beginning. However, this does not invalidate the truly online claim since $\gamma$ can be straightforwardly set based on the size of the competition class alone, which is something we naturally have access to at the design of the algorithm. 
	\begin{corollary}\label{cor:1}
		When $\gamma=\sqrt{\frac{W_T}{{2(e-2)}}}$, where $W_T$ is an upper bound on our competing class such that $W(\Lambda_T)\leq W_T$, we have
		\begin{align*}
		\sum_{t=1}^T(\E_{p_{t,m}}\phi_{t,m}-\phi_{t,m_t})\leq&W_TD_T+{2.4\sqrt{W_TV_T}},
		\end{align*}
		\begin{proof}
			The proof is in the Appendix \ref{pro:cor:1}.
		\end{proof}
	\end{corollary}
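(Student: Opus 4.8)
The plan is to derive Corollary \ref{cor:1} directly from Theorem \ref{thm:bound2} by substituting the prescribed parameter $\gamma=\sqrt{W_T/(2(e-2))}$ and invoking the hypothesis $W(\Lambda_T)\le W_T$. No new regret decomposition is needed: the quantity $\phi_{t,m_t}$ here is the performance of the competing strategy (the same role played by $\phi_{t,\lambda_t(1)}$ in the earlier results), so the corollary is purely a matter of simplifying the two terms on the right-hand side of Theorem \ref{thm:bound2}. Before doing so I would note that $\gamma$ as defined is positive and requires no online knowledge beyond the a priori class bound $W_T$, and that by Remark \ref{rem:1} the learning rate \eqref{etat} is then nonincreasing and satisfies $-\eta_t\phi_{t,m}\le 1$, so the hypotheses of Theorem \ref{thm:bound2} are met.

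For the first (``main'') term $\frac{W(\Lambda_T)}{\gamma}\sqrt{V_T+\gamma^2 D_T^2}$, I would use $W(\Lambda_T)\le W_T$ together with the subadditivity of the square root, $\sqrt{V_T+\gamma^2 D_T^2}\le\sqrt{V_T}+\gamma D_T$, bounding this term by $\frac{W_T}{\gamma}\sqrt{V_T}+W_T D_T$. Since $\frac{W_T}{\gamma}=\sqrt{2(e-2)\,W_T}$ for the chosen $\gamma$, this contributes $\sqrt{2(e-2)\,W_T V_T}+W_T D_T$. For the second term $2(e-2)\gamma\sqrt{V_T}$, substituting the same $\gamma$ gives $2(e-2)\sqrt{W_T/(2(e-2))}\,\sqrt{V_T}=\sqrt{2(e-2)\,W_T V_T}$, i.e. exactly the first contribution again. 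Adding the pieces yields the intermediate bound $W_T D_T+2\sqrt{2(e-2)\,W_T V_T}$.

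The only remaining point is a numerical one: the leading constant satisfies $2\sqrt{2(e-2)}=\sqrt{8e-16}\approx 2.397<2.4$, so $2\sqrt{2(e-2)\,W_T V_T}\le 2.4\sqrt{W_T V_T}$, and the stated bound $W_T D_T+2.4\sqrt{W_T V_T}$ follows. I do not expect any genuine obstacle: the whole argument is the square-root subadditivity step, the cancellation of the two equal $\sqrt{2(e-2)\,W_T V_T}$ contributions, and the constant rounding; the only thing to watch is the bookkeeping around the $\gamma$ substitution and confirming that Remark \ref{rem:1} indeed licenses the use of Theorem \ref{thm:bound2} for this choice of $\gamma$.
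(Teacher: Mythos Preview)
Your proof is correct and follows essentially the same route as the paper's own argument: apply the subadditivity of the square root to the bound from Theorem~\ref{thm:bound2}, replace $W(\Lambda_T)$ by $W_T$, substitute $\gamma=\sqrt{W_T/(2(e-2))}$ to equalize the two $\sqrt{V_T}$ terms, and round $2\sqrt{2(e-2)}\le 2.4$. The paper justifies the final numerical step via $2(e-2)\le 1.44$ (hence $\sqrt{2(e-2)}\le 1.2$), which is equivalent to your computation.
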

	
	\begin{remark}
		For any $t$, let $\phi_{t,m}=l_{t,m}-\mu_t$ for all $m$ for some $\mu_t$. Then,
		\begin{align}
			\sum_{t=1}^{T}(\E_{p_{t,m}}\phi_{t,m}-\phi_{t,m_t})=\sum_{t=1}^{T}(\E_{p_{t,m}}l_{t,m}-l_{t,m_t}),
		\end{align}
		hence, all performance bounds in this section will hold as regret bounds as long as $\phi_{t,m}$ is a translation of $l_{t,m}$.
	\end{remark}
	\begin{remark}
		For every $\{\mu_t\}_{t=1}^T\in \Re^T$, $\{d_t\}_{t=1}^T$ and $\{D_t\}_{t=1}^T$ remain unchanged, since for all $t$ 
		\begin{align}
			d_t=\max_m l_{t,m} - \min_m l_{t,m}.
		\end{align}
	\end{remark}
	\begin{remark}\label{rem:2}
		The sequence of $\{\mu_t \}_{t=1}^T$ that minimizes the regret bounds is
		\begin{align}
			\mu_t^*=\E_{p_{t,m}}l_{t,m},
		\end{align}
		since 
		\begin{align}
			V_T=&\sum_{t=1}^{T}v_t\\
			=&\sum_{t=1}^{T}\E_{p_{t,m}}(l_{t,m}-\mu_t)^2\\
			=&\sum_{t=1}^{T}\E_{p_{t,m}}(l_{t,m}-\E_{p_{t,m}}l_{t,m})^2+(\E_{p_{t,m}}l_{t,m}-\mu_t)^2,
		\end{align}
		which also satisfies our requirement in Remark \ref{rem:1} that there is at least one $m'$ such that $\phi_{t,m'}\geq0$ for every $t$ individually .
	\end{remark}
	Since the weights are updated after the declaration of $p_{t,m}$ and observation of $l_{t,m}$, there is no problem in using the translation of Remark \ref{rem:2}. Hence, without issue, we can set the performance metric as
	\begin{align}
		\phi_{t,m}=l_{t,m}-\E_{p_{t,m}}l_{t,m}.\label{phitm}
	\end{align} 
	\begin{corollary}\label{cor:1.1}
		When $\phi_{t,m}$ is set as \eqref{phitm}, our result in Corollary \ref{cor:1} becomes 
		\begin{align*}
			\sum_{t=1}^T(\E_{p_{t,m}}l_{t,m}-l_{t,m_t})\leq&W_TD_T+{2.4\sqrt{W_TV_T^*}},
		\end{align*}
		such that $V_T^*$ is the sum of loss variances with our selection probabilities.
		\begin{proof}
			The proof is straightforward by using \eqref{phitm} in Corollary \ref{cor:1}.
		\end{proof}
	\end{corollary}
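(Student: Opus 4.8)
The plan is to specialize Corollary \ref{cor:1} to the particular performance metric $\phi_{t,m}=l_{t,m}-\E_{p_{t,m}}l_{t,m}$ prescribed in \eqref{phitm}. First I would invoke Corollary \ref{cor:1}, which asserts $\sum_{t=1}^T(\E_{p_{t,m}}\phi_{t,m}-\phi_{t,m_t})\leq W_TD_T+2.4\sqrt{W_TV_T}$ whenever the hypotheses of Theorem \ref{thm:bound2} (and hence Theorem \ref{thm:bound}) are in force. Before applying it I must check those hypotheses hold for this choice of $\phi$: the learning rate \eqref{etat} is used, so by Remark \ref{rem:1} it suffices that for each $t$ there is some $m'$ with $\phi_{t,m'}\geq0$; but $\E_{p_{t,m}}\phi_{t,m}=\E_{p_{t,m}}l_{t,m}-\E_{p_{t,m}}l_{t,m}=0$, so a convex combination of the $\phi_{t,m}$ equals zero and they cannot all be strictly negative, which furnishes the required nonnegative coordinate (as already noted in Remark \ref{rem:2}).

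Next I would rewrite the left-hand side. Since $\phi_{t,m}$ differs from $l_{t,m}$ only by the $m$-independent shift $\mu_t=\E_{p_{t,m}}l_{t,m}$, the per-round difference is translation invariant, $\E_{p_{t,m}}\phi_{t,m}-\phi_{t,m_t}=\E_{p_{t,m}}l_{t,m}-l_{t,m_t}$, exactly the identity recorded in the remarks preceding Remark \ref{rem:2}. Summing over $t$ turns the left-hand side of Corollary \ref{cor:1} into $\sum_{t=1}^T(\E_{p_{t,m}}l_{t,m}-l_{t,m_t})$, the quantity in the statement.

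Finally I would handle the right-hand side. The range quantity $D_T=\max_{1\leq t\leq T}(\max_m l_{t,m}-\min_m l_{t,m})$ is unaffected by the shift, so the $W_TD_T$ term is unchanged. For the second-order term, $V_T=\sum_{t=1}^T v_t=\sum_{t=1}^T\E_{p_{t,m}}\phi_{t,m}^2=\sum_{t=1}^T\E_{p_{t,m}}(l_{t,m}-\E_{p_{t,m}}l_{t,m})^2$, which by the bias–variance decomposition displayed in Remark \ref{rem:2} (with $\mu_t=\mu_t^*$, so the bias term vanishes) is precisely $V_T^*$, the sum over rounds of the variance of the loss under $p_{t,\cdot}$. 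Substituting yields $\sum_{t=1}^T(\E_{p_{t,m}}l_{t,m}-l_{t,m_t})\leq W_TD_T+2.4\sqrt{W_TV_T^*}$, as claimed. There is essentially no obstacle in this derivation; the only point that genuinely requires care is verifying the sign condition of Remark \ref{rem:1}, which, as above, is immediate from $\E_{p_{t,m}}\phi_{t,m}=0$.
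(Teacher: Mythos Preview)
Your proposal is correct and follows exactly the approach the paper indicates: substitute the centered performance metric \eqref{phitm} into Corollary \ref{cor:1}, then use the translation invariance of the regret difference and of $D_T$ together with the variance identification of Remark \ref{rem:2} to obtain $V_T=V_T^*$. You simply spell out in detail what the paper compresses into the one-line remark ``straightforward by using \eqref{phitm} in Corollary \ref{cor:1}.''
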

	We can also straightforwardly acquire the following regret bound that is not dependent on the selection probabilities $p_{t,m}$.
	\begin{corollary}\label{cor:1.2}
		We also have the following result instead of the one in Corollary \ref{cor:1.1}, which is
		\begin{align*}
		\sum_{t=1}^T(\E_{p_{t,m}}l_{t,m}-l_{t,m_t})\leq& W_TD_T+1.2{\sqrt{W_T\sum_{t=1}^{T}d_t^2}}.
		\end{align*}
		\begin{proof}
			The proof uses the fact that the variance of a loss with respect to any probability simplex is at most $d_t^2/4$.
		\end{proof}
	\end{corollary}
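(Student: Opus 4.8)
The plan is to reduce the claim to Corollary~\ref{cor:1.1} and then invoke the extremal bound on the variance of a bounded random variable. Corollary~\ref{cor:1.1} already gives
\[
\sum_{t=1}^T(\E_{p_{t,m}}l_{t,m}-l_{t,m_t})\leq W_TD_T+2.4\sqrt{W_TV_T^*},
\]
where $V_T^*=\sum_{t=1}^T v_t$ with $v_t=\E_{p_{t,m}}(l_{t,m}-\E_{p_{t,m}}l_{t,m})^2$, i.e.\ $V_T$ evaluated at the optimal translation $\mu_t^*=\E_{p_{t,m}}l_{t,m}$ of Remark~\ref{rem:2}. So it suffices to bound $V_T^*$ from above by a quantity that does not depend on the selection probabilities, namely $\tfrac14\sum_{t=1}^T d_t^2$.

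The key step is exactly the hint: for each fixed $t$, the loss $l_{t,m}$, regarded as a random variable over the draw $m\sim p_{t,m}$, is supported on an interval of length $d_t=\max_m l_{t,m}-\min_m l_{t,m}$ (recall from the Remarks that $d_t$ is unchanged by the translation \eqref{phitm}, so this is the same $d_t$ appearing everywhere in Section~\ref{sec:regret}). By Popoviciu's inequality, the variance of any random variable taking values in an interval of length $d_t$ is at most $d_t^2/4$ (with equality for the symmetric two-point distribution on the endpoints). Hence $v_t\leq d_t^2/4$ for every $t$, and summing over $t=1,\dots,T$ yields $V_T^*\leq \tfrac14\sum_{t=1}^T d_t^2$.

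Substituting this into the bound of Corollary~\ref{cor:1.1} and using $2.4\sqrt{1/4}=1.2$ gives
\[
\sum_{t=1}^T(\E_{p_{t,m}}l_{t,m}-l_{t,m_t})\leq W_TD_T+2.4\sqrt{W_T\cdot\tfrac14\sum_{t=1}^T d_t^2}=W_TD_T+1.2\sqrt{W_T\sum_{t=1}^T d_t^2},
\]
which is the asserted inequality. I do not expect a genuine obstacle in this argument: once Corollary~\ref{cor:1.1} is in hand, everything is a one-line application of the variance bound. The only point deserving a careful sentence is the identification of $d_t$ across the translated and untranslated loss vectors, so that the $d_t^2/4$ estimate legitimately applies to the variance term sitting inside $V_T^*$.
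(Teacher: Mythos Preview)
Your proposal is correct and follows exactly the approach sketched in the paper: apply Corollary~\ref{cor:1.1}, bound each variance term $v_t$ by $d_t^2/4$ via the standard extremal inequality for the variance of a bounded random variable, and absorb the factor $\sqrt{1/4}$ into the constant $2.4$ to obtain $1.2$. Your additional remark that $d_t$ is invariant under the translation \eqref{phitm} is a helpful clarification, but otherwise there is nothing to add.
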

	
	\section{Example Application}\label{sec:example}
	After providing the algorithmic framework in Section \ref{sec:method} and the accompanying regret bounds in Section \ref{sec:regret}, in this section, we finally demonstrate the construction of our algorithm for a specific problem and its competition class. 
	
	As an example, we consider the problem of competing against evolving expert selection strategies. In the following toy example, we consider the strategies with a moving rate $\sigma$ such that if a strategy choses the expert $m\in\{0,1,\ldots,M-1\}$ at round $t$, then it will select the expert $m'=(m+\sigma)(\bmod\enspace M)$ at time $t+1$. 
	
	We aim to compete against all such strategies with fixed moving rate $\sigma$ which may not be necessarily bounded from above. However, in this problem, since the expert transitions between rounds follows a cyclic behavior (because of the mod operation), we, in truth, only have $M$ such unique moving rates which are $\sigma\in\{0,1,\ldots,M-1\}$. 
	
	We can utilize our framework by designing the equivalence class with parameters $\lambda_t$ at time $t$ which includes the expert selection $m$ at time $t$ and the moving rate $\sigma$ as
	\begin{align}
		\lambda_t=[m,\sigma].
	\end{align}
	Thus, we have, in total, $M^2$ equivalence classes. The algorithm becomes as the following:
	$\Tau(\lambda_{t+1}|\lambda_t)=1$ if $\lambda_{t+1}(2)=\lambda_{t}(2)$ and $\lambda_{t+1}(1)=(\lambda_t(1)+\lambda_t(2))(\bmod\enspace M)$; and $0$ otherwise. Hence, we have
	\begin{align}
		\Tau([(m+\sigma)(\bmod M),\sigma]|[m,\sigma])=1.
	\end{align}
	Since a strategy from the competition class has $\Tau(\Lambda_T)=1$, we have $W(\lambda_T)=1+2\log(M)=W_T$. Thus, we get the following result.
	\begin{corollary}\label{cor:2}
		When $\gamma=\sqrt{\frac{1+2\log(M)}{{2(e-2)}}}$, we have
		\begin{align*}
		\sum_{t=1}^T(\E_{p_{t,m}}l_{t,m}-l_{t,m_t})\leq&(1+2\log(M))D_T\\
		&+{2.4\sqrt{(1+2\log(M))V_T^*}},
		\end{align*}
		where $D_T$ is the maximum loss range in $T$ rounds, and $V_T$ is the sum of loss variances.
		\begin{proof}
			The proof is straightforward by application of Corollary \ref{cor:1.1} with $W_T=1+2\log(M)$.
		\end{proof}
	\end{corollary}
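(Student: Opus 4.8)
The plan is to recognize this corollary as a direct specialization of Corollary~\ref{cor:1.1} once the quantity $W(\Lambda_T)$ has been evaluated for the cyclic moving-rate competition class, so that essentially no new analysis is required beyond bookkeeping about the equivalence-class structure. First I would check that the transition rule $\Tau(\cdot|\cdot)$ prescribed in the example is a legitimate transition for Algorithm~\ref{alg:framework}: for a fixed $\lambda_t=[m,\sigma]$ the unique successor carrying nonzero weight is $[(m+\sigma)\bmod M,\sigma]$, and it carries weight $1$, so $\sum_{\lambda_{t+1}\in\Omega_{t+1}}\Tau(\lambda_{t+1}|\lambda_t)=1$ as required; hence Theorem~\ref{thm:bound}, Theorem~\ref{thm:bound2}, Corollary~\ref{cor:1} and Corollary~\ref{cor:1.1} all apply verbatim with the performance metric $\phi_{t,m}$ chosen as in \eqref{phitm}.

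Next I would count the equivalence classes: a parameter vector is $[m,\sigma]$ with $m\in\{0,\ldots,M-1\}$ and $\sigma\in\{0,\ldots,M-1\}$ (the cyclic $\bmod M$ reduction leaves exactly $M$ distinct moving rates), so $|\Omega_t|=M^2$ for every $t\ge 1$, giving $\log\big(\max_{1\le t\le T}|\Omega_{t-1}|\big)=2\log M$. Then I would evaluate $\Tau(\Lambda_T)$ along the parameter path followed by an arbitrary strategy in the class: such a strategy selecting expert $m_1$ at time $1$ and obeying $m_{t+1}=(m_t+\sigma)\bmod M$ corresponds to $\Lambda_T=\{[m_t,\sigma]\}_{t=1}^T$, and every transition weight $\Tau(\lambda_{t+1}|\lambda_t)$ along this path equals $1$ by construction, so $\Tau(\Lambda_T)=\prod_{t=1}^T\Tau(\lambda_t|\lambda_{t-1})=1$ and $\log\Tau(\Lambda_T)=0$. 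Substituting into $W(\Lambda_T)=1+\log(\max_{1\le t\le T}|\Omega_{t-1}|)-\log(\Tau(\Lambda_T))$ yields $W(\Lambda_T)=1+2\log M$ for every strategy in the class, so $W_T=1+2\log M$ is a valid (indeed tight) upper bound in the sense required by Corollary~\ref{cor:1.1}.

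Finally I would set $\gamma=\sqrt{(1+2\log M)/(2(e-2))}=\sqrt{W_T/(2(e-2))}$ and feed $W_T=1+2\log M$ and $V_T^*$ into Corollary~\ref{cor:1.1}, which immediately produces
\[\sum_{t=1}^T(\E_{p_{t,m}}l_{t,m}-l_{t,m_t})\le(1+2\log M)D_T+2.4\sqrt{(1+2\log M)V_T^*},\]
the claimed bound, with $D_T$ the maximum loss range over $T$ rounds and $V_T^*$ the sum of loss variances under the selection probabilities.

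The only point requiring a little care — and the closest thing to an obstacle — is the bookkeeping at the time origin: one must confirm that the initialization of Algorithm~\ref{alg:framework} is consistent with assigning full transition weight $\Tau(\lambda_1|\lambda_0)=1$ along each strategy's parameter path (equivalently, that the $t=1$ boundary can be absorbed into the $|\Omega_0|$ term without changing the $2\log M$ factor). Once that is settled, the identity $\Tau(\Lambda_T)=1$ and the count $|\Omega_t|=M^2$ both follow immediately from the cyclic structure, and nothing further is needed.
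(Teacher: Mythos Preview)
Your proposal is correct and follows exactly the route the paper intends: the paper's own proof is the one-line remark ``apply Corollary~\ref{cor:1.1} with $W_T=1+2\log(M)$,'' and the preceding paragraph of Section~\ref{sec:example} already records the two ingredients you spell out, namely $|\Omega_t|=M^2$ and $\Tau(\Lambda_T)=1$. Your write-up is simply a more explicit version of the same argument, and the boundary issue you flag at $t=1$ is handled in the paper by the standing conventions $|\Omega_0|=1$, $z_{\lambda_0}=1$ used in Lemma~\ref{lem:uB3} and Appendix~\ref{pro:lem:uB4}.
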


	\section{Conclusion}\label{sec:conc}
	In conclusion, we have successfully created a completely online, generalized algorithm for prediction by expert advice. Our performance bounds are translation-free and scale-free of the expert losses. With suitable design, it is possible to compete against a subset of the all possible expert selection strategies that is appropriate for a given problem. By combining the similar strategies together in each step of the algorithm, and creating appropriate equivalence classes, we can compete against the strategies with minimal redundancy and in a computationally efficient manner.
	
\bibliographystyle{ieeetran}
\bibliography{double_bib}	
	
	\begin{appendices}
	
	\section{Proof of Lemma \ref{lem:uB1}}\label{pro:lem:uB1}
	First of all, we have
	\begin{align}
	\frac{1}{\eta_{t-1}}\log&\E_{p_{t,m}}[e^{-\eta_{t-1}\phi_{t,m}}]\nonumber\\	&=\frac{1}{\eta_{t-1}}\log\E_{p_{t,m}}[e^{-\eta_{t}\phi_{t,m}+(\eta_t-\eta_{t-1})\phi_{t,m}}]\nonumber\\
	&\leq\frac{1}{\eta_{t-1}}\log\E_{p_{t,m}}[e^{-\eta_{t}\phi_{t,m}+(\eta_t-\eta_{t-1})a_t}]\nonumber\\
	&\leq\frac{1}{\eta_{t-1}}\log\E_{p_{t,m}}[e^{-\eta_{t}\phi_{t,m}}]+\left(\frac{\eta_t}{\eta_{t-1}}-1\right)a_t,\label{eq:2}
	\end{align}
	where $a_t$ is either minimum or maximum of $\phi_{t,m}$ over $m$ depending on whether or not $\eta_{t-1}$ is greater than $\eta_{t}$, i.e.,
	\begin{align}
	a_t=\left\{
	\begin{array}{ll}
	\min_m\phi_{t,m} & \eta_t\leq\eta_{t-1}\\
	\max_m\phi_{t,m} & \eta_t\geq\eta_{t-1}
	\end{array}.
	\right.
	\end{align}
	Secondly, we also have
	\begin{align}
	\frac{1}{\eta_{t-1}}\log\E_{p_{t,m}}[e^{-\eta_{t}\phi_{t,m}}]-&\frac{1}{\eta_{t}}\log\E_{p_{t,m}}[e^{-\eta_{t}\phi_{t,m}}]\nonumber\\
	&\leq\left(\frac{1}{\eta_{t-1}}-\frac{1}{\eta_{t}}\right)\log\E_{p_{t,m}}[e^{-\eta_{t}b_t}]\nonumber\\
	&\leq-\left(\frac{\eta_t}{\eta_{t-1}}-1\right)b_t,\label{eq:3}
	\end{align}
	where $b_t$ is either minimum or maximum of $\phi_{t,m}$ over $m$ depending on whether or not $\eta_{t}$ is greater than $\eta_{t-1}$, i.e.,
	\begin{align}
	b_t=\left\{
	\begin{array}{ll}
	\max_m\phi_{t,m} & \eta_t\leq\eta_{t-1}\\
	\min_m\phi_{t,m} & \eta_t\geq\eta_{t-1}
	\end{array}.
	\right.
	\end{align}
	Combining \eqref{eq:2} and \eqref{eq:3}, we get
	\begin{align}
	\frac{1}{\eta_{t-1}}\log\E_{p_{t,m}}[e^{-\eta_{t-1}\phi_{t,m}}]&-\frac{1}{\eta_{t}}\log\E_{p_{t,m}}[e^{-\eta_{t}\phi_{t,m}}]\nonumber\\
	&\leq\left(\frac{\eta_t}{\eta_{t-1}}-1\right)(a_t-b_t)\nonumber\\
	&\leq\left|\frac{\eta_t}{\eta_{t-1}}-1\right|d_t,\label{eq:4}
	\end{align}
	where $d_t\triangleq(\max_m\phi_{t,m}-\min_m\phi_{t,m})$. Moreover, since
	\begin{align}
	-\frac{1}{\eta_t}\log\E_{p_{t,m}}[e^{-\eta_t\phi_{t,m}}]=&-\frac{1}{\eta_{t-1}}\log\E_{p_{t,m}}[e^{-\eta_{t-1}\phi_{t,m}}]\nonumber\\
	&+\frac{1}{\eta_{t-1}}\log\E_{p_{t,m}}[e^{-\eta_{t-1}\phi_{t,m}}]\nonumber\\
	&-\frac{1}{\eta_t}\log\E_{p_{t,m}}[e^{-\eta_t\phi_{t,m}}]\label{eq:1},
	\end{align}
	putting \eqref{eq:4} into \eqref{eq:1} concludes the proof.
	
	\section{Proof of Lemma \ref{lem:uB2}}\label{pro:lem:uB2}
	We continue from Lemma \ref{lem:uB1} to bound the excess term on the right hand side. To begin with, we have
	\begin{align}
	-\frac{1}{\eta_{t-1}}\log&\left(\E_{p_{t,m}}[e^{-\eta_{t-1}\phi_{t,m}}]\right)\nonumber\\
	&=-\frac{1}{\eta_{t-1}}\log\left(\frac{\sum_{m}w_{t,m}e^{-\eta_{t-1}\phi_{t,m}}}{\sum_{m'}w_{t,m'}}\right)\label{eq:pr1}\\
	&=-\frac{1}{\eta_{t-1}}\log\left(\frac{\sum_{\lambda_t\in\Omega_t}w_{\lambda_t}e^{-\eta_{t-1}\phi_{t,\lambda_t(1)}}}{\sum_{\lambda_t\in\Omega_t}w_{\lambda_t}}\right)\label{eq:pr2}\\
	&=-\frac{1}{\eta_{t-1}}\log\left(\frac{\sum_{\lambda_t\in\Omega_t}z_{\lambda_t}}{\sum_{\lambda_t\in\Omega_t}w_{\lambda_t}}\right),\label{eq:pr3}\\
	&=-\frac{1}{\eta_{t-1}}\log\left(\frac{\sum_{\lambda_t\in\Omega_t}z_{\lambda_t}}{\sum_{\lambda_{t-1}\in\Omega_{t-1}}z_{\lambda_{t-1}}^{\frac{\eta_{t-1}}{\eta_{t-2}}}}\right),\label{eq:5}
	\end{align}
	where \eqref{eq:pr1}, \eqref{eq:pr2}, \eqref{eq:pr3} and \eqref{eq:5} use results from \eqref{pmt}, \eqref{wlt}, \eqref{zlt} and \eqref{wlt+} respectively.
	Moreover, for the denominator in the logarithm, we have
	\begin{align}
	&\frac{1}{\eta_{t-1}}\log\left({\sum_{\lambda_{t-1}\in\Omega_{t-1}}z_{\lambda_{t-1}}^{\frac{\eta_{t-1}}{\eta_{t-2}}}}\right)\nonumber\\
	&\leq\frac{1}{\eta_{t-1}}\log\left({\sum_{\lambda_{t-1}}\frac{1}{|\Omega_{t-1}|}z_{\lambda_{t-1}}^{\frac{\eta_{t-1}}{\eta_{t-2}}}}\right)+\frac{\log(|\Omega_{t-1}|)}{\eta_{t-1}},\nonumber\\
	&\leq\frac{1}{\eta_{t-2}}\frac{\eta_{t-2}}{\eta_{t-1}}\log\left({\sum_{\lambda_{t-1}}\frac{1}{|\Omega_{t-1}|}z_{\lambda_{t-1}}^{\frac{\eta_{t-1}}{\eta_{t-2}}}}\right)+\frac{\log(|\Omega_{t-1}|)}{\eta_{t-1}},\nonumber\\
	&\leq\frac{1}{\eta_{t-2}}\log\left({\sum_{\lambda_{t-1}}\frac{1}{|\Omega_{t-1}|}z_{\lambda_{t-1}}}\right)+\frac{\log(|\Omega_{t-1}|)}{\eta_{t-1}},\label{eq:pr6.1}\\
	&\leq\frac{1}{\eta_{t-2}}\log\left({\sum_{\lambda_{t-1}}z_{\lambda_{t-1}}}\right)+(\frac{1}{\eta_{t-1}}-\frac{1}{\eta_{t-2}})\log(|\Omega_{t-1}|),\label{eq:6}
	\end{align}
	where \eqref{eq:pr6.1} uses Jensen's Inequality and the set $\Omega_{t-1}$ is omitted over the summations after the first line for space considerations.
	Putting \eqref{eq:6} into \eqref{eq:5}, we get
	\begin{align}
	-\frac{1}{\eta_{t-1}}\log\E_{p_{t,m}}[e^{-\eta_{t-1}\phi_{t,m}}]\leq&-\frac{1}{\eta_{t-1}}\log\left({\sum_{\lambda_t\in\Omega_t}z_{\lambda_t}}\right)\nonumber\\
	&+\frac{1}{\eta_{t-2}}\log\left({\sum_{\lambda_{t-1}\in\Omega_{t-1}}z_{\lambda_{t-1}}}\right)\nonumber\\
	&+(\frac{1}{\eta_{t-1}}-\frac{1}{\eta_{t-2}})\log(|\Omega_{t-1}|),\label{eq:7}
	\end{align}
	which concludes the proof.

	\section{Proof of Lemma \ref{lem:uB3}}\label{pro:lem:uB3}
	By definition in \eqref{zlt}, we have
	\begin{align}
	-\log(z_{\lambda_t})=\eta_{t-1}\phi_{t,\lambda_{t}(1)}-\log(w_{\lambda_{t}}),\label{eq:logzlt}
	\end{align}
	and from \eqref{wlt+}, we have	
	\begin{align}
	-\log(w_{t,\lambda_t})\leq-\log(\Tau(\lambda_{t}|\lambda_{t-1}))-\frac{\eta_{t-1}}{\eta_{t-2}}\log(z_{\lambda_{t-1}}).\label{eq:logwlt+}
	\end{align}
	Combining \eqref{eq:logzlt} and \eqref{eq:logwlt+}, we get
	\begin{align}
	-\frac{1}{\eta_{t-1}}\log(z_{\lambda_t})=&\enspace\phi_{t,\lambda_{t}(1)}-\frac{1}{\eta_{t-1}}\log w_{\lambda_t}\nonumber\\
	\leq&\enspace\phi_{t,\lambda_{t}(1)}-\frac{1}{\eta_{t-1}}\log(\Tau(\lambda_t|\lambda_{t-1}))\nonumber\\
	&-\frac{1}{\eta_{t-2}}\log(z_{\lambda_{t-1}}).\label{eq:logzlt+}
	\end{align}
	From the telescoping relation in \eqref{eq:logzlt+}, we get
	\begin{align}
	-\frac{1}{\eta_{T-1}}\log(z_{\lambda_T})\leq&\sum_{t=1}^T\phi_{t,\lambda_{t}(1)}-\sum_{t=1}^T\frac{1}{\eta_{t-1}}\log(\Tau(\lambda_t|\lambda_{t-1})),
	\end{align}
	since $z_{0,m}=1$ and concludes the proof.
	
	\section{Proof of Lemma \ref{lem:uB4}}\label{pro:lem:uB4}
	We sum \eqref{eq:7} from $t=1$ to $T$, and get
	\begin{align}
	\sum_{t=1}^{T}\frac{1}{\eta_{t-1}}&\log\E_{p_{t,m}}[e^{-\eta_{t-1}\phi_{t,m}}]\nonumber\\
	\geq&\enspace\frac{1}{\eta_{T-1}}\log(\sum_{\lambda_{T}\in\Omega_{T}}z_{\lambda_{T}})-\frac{1}{\eta_{-1}}\log(\sum_{\lambda_{0}\in\Omega_{0}}z_{\lambda_{0}})\nonumber\\
	&-\sum_{t=1}^{T}(\frac{1}{\eta_{t-1}}-\frac{1}{\eta_{t-2}})\log(|\Omega_{t-1}|),
	\end{align}
	where $\eta_{-1}$ and $\eta_{0}$ can be arbitrarily chosen as $\eta_{1}$ and $|\lambda_0|$ as $1$, $z_{\lambda_{0}}=1$.
	Then, using Lemma \ref{lem:uB1}, \ref{lem:uB2} and \ref{lem:uB3}, we get
	\begin{align}
	\sum_{t=1}^T\frac{1}{\eta_{t}}\log\left(\E_{p_{t,m}}[e^{-\eta_t\phi_{\lambda_t}}]\right)\geq&-\sum_{t=1}^T\phi_{t,\lambda_t(1)}-\sum_{t=1}^{T}(1-\frac{\eta_t}{\eta_{t-1}})d_t\nonumber\\
	&+\sum_{t=1}^T\frac{1}{\eta_{t-1}}\log\left(\Tau(\lambda_{t}|\lambda_{t-1})\right)\nonumber\\
	&-\sum_{t=1}^{T}(\frac{1}{\eta_{t-1}}-\frac{1}{\eta_{t-2}})\log(|\Omega_{t-1}|).\label{eq:8}
	\end{align}
	Since $\eta_t\leq\eta_{t-1}$ and $|\Omega_t|\geq1$, \eqref{eq:8} becomes
	\begin{align}
	\sum_{t=1}^T\frac{1}{\eta_{t}}\log\left(\E_{p_{t,m}}[e^{-\eta_t\phi_{m}}]\right)\geq&-\sum_{t=1}^T\phi_{t,\lambda_t(1)}-\sum_{t=1}^{T}(1-\frac{\eta_t}{\eta_{t-1}})d_t\nonumber\\
	&+\sum_{t=1}^T\frac{1}{\eta_{t-1}}\log\left(\Tau(\lambda_{t}|\lambda_{t-1})\right)\nonumber\\
	&-\frac{1}{\eta_{T-1}}\log(\max_{1\leq t\leq T}|\Omega_{t-1}|),
	\end{align}
	which concludes the proof.
	
	\section{Proof of Theorem \ref{thm:bound}}\label{pro:thm:bound}
	We combine Lemma \ref{lem:lB} and \ref{lem:uB4} to get
	\begin{align}
	\sum_{t=1}^T\left(\E_{p_{t,m}}\phi_{t,m}-\phi_{t,m_t}\right)\leq& (e-2)\sum_{t=1}^T\eta_t\E_{p_{t,m}}\phi_{t,m}^2\nonumber\\
	&+\frac{\log(\max_{1\leq t\leq T}|\Omega_{t-1}|)}{\eta_{T-1}}\nonumber\\
	&-\sum_{t=1}^T\frac{1}{\eta_{t-1}}\log(\Tau(\lambda_t|\lambda_{t-1}))\nonumber\\
	&+\sum_{t=1}^T(1-\frac{\eta_t}{\eta_{t-1}})d_t.
	\end{align}
	Since $\eta_t$ is decreasing and $\Tau(\lambda_t|\lambda_{t-1})\leq 1$, we get
	\begin{align}
	\sum_{t=1}^T\left(\E_{p_{t,m}}\phi_{t,m}-\phi_{t,m_t}\right)\leq& (e-2)\sum_{t=1}^T\eta_t\E_{p_{t,m}}\phi_{t,m}^2\nonumber\\
	&+\frac{\log(\max_{1\leq t\leq T}|\Omega_{t-1}|)}{\eta_{T-1}}\nonumber\\
	&-\frac{1}{\eta_{T-1}}\log(\Tau(\Lambda_T))\nonumber\\
	&+\sum_{t=1}^T(1-\frac{\eta_t}{\eta_{t-1}})d_t,
	\end{align}
	where $\Tau(\Lambda_T)\triangleq\prod_{t=1}^T\Tau(\lambda_t|\lambda_{t-1})$, which concludes the proof.
	
	\section{Proof of Theorem \ref{thm:bound2}}\label{pro:thm:bound2}
	From \eqref{etat} and the definition of $v_t$, we get
	\begin{align}
		\sum_{t=1}^T\eta_t\E_{p_{t,m}}\phi_{t,m}^2=&\sum_{t=1}^T\frac{\gamma}{\sqrt{V_t+\gamma^2D_t^2}}v_t\nonumber\\
		\leq&\sum_{t=1}^T\frac{\gamma}{\sqrt{V_t}}v_t\nonumber\\
		\leq&\gamma\sum_{t=1}^T\frac{V_t-V_{t-1}}{\sqrt{V_t}}\nonumber\\
		\leq&\gamma\sum_{t=1}^T(\sqrt{V_t}-\sqrt{V_{t-1}})\frac{\sqrt{V_t}+\sqrt{V_{t-1}}}{\sqrt{V_t}}\nonumber\\
		\leq&2\gamma\sum_{t=1}^T(\sqrt{V_t}-\sqrt{V_{t-1}})\nonumber\\
		\leq&2\gamma\sqrt{V_T}.\label{etaVt}
	\end{align}
	Moreover, from \eqref{etat} and the definitions of $d_t$, $D_t$, we get
	\begin{align}
		\sum_{t=1}^T(1-\frac{\eta_t}{\eta_{t-1}})d_t=&\sum_{t=1}^T\left(1-\frac{\sqrt{V_{t-1}+\gamma^2D_{t-1}^2}}{\sqrt{V_t+\gamma^2D_t^2}}\right)d_t\nonumber\\
		\leq&\sum_{t=1}^T\left(1-\frac{\sqrt{V_{t-1}+\gamma^2D_{t-1}^2}}{\sqrt{V_t+\gamma^2D_t^2}}\right)D_t\nonumber\\
		\leq&\frac{1}{\gamma}\sum_{t=1}^T\left({\sqrt{V_t+\gamma^2D_t^2}-\sqrt{V_{t-1}+\gamma^2D_{t-1}^2}}\right)\nonumber\\
		\leq&\frac{1}{\gamma}\sqrt{V_T+\gamma^2D_T^2},\label{etaDt}
	\end{align}
	Using \eqref{etaVt}, \eqref{etaDt} and the fact that $\eta_T\leq\eta_{T-1}$ in Theorem \ref{thm:bound},  we get
	\begin{align}
	\sum_{t=1}^T(\E_{p_{t,m}}\phi_{t,m}-\phi_{t,m_t})
	\leq&\frac{W(\Lambda_T)}{\gamma}\sqrt{V_T+\gamma^2D_T^2}\nonumber\\
	&+{2(e-2)\gamma\sqrt{V_T}},
	\end{align}
	where $W(\Lambda_T)\triangleq 1+\log(|\Omega_{T}|)-\log(\Tau(\Lambda_T))$ and concludes the proof.
	
	\section{Proof of Corollary \ref{cor:1}}\label{pro:cor:1}
	From Theorem \ref{thm:bound2} and concavity of the squareroot, we have
	\begin{align}
	\sum_{t=1}^T(\E_{p_{t,m}}\phi_{t,m}-\phi_{t,m_t})\leq&\frac{W(\Lambda_T)}{\gamma}\sqrt{V_T}+W(\Lambda_T)D_T\nonumber\\
	&+{2(e-2)\gamma\sqrt{V_T}},
	\end{align}
	since $W_T\geq W(\Lambda_T)$, we get
	\begin{align}
	\sum_{t=1}^T(\E_{p_{t,m}}\phi_{t,m}-\phi_{t,m_t})\leq&\frac{W_T}{\gamma}\sqrt{V_T}+W_TD_T\nonumber\\
	&+{2(e-2)\gamma\sqrt{V_T}},\label{coreq1}
	\end{align}
	We put $\gamma=\sqrt{\frac{W_T}{2(e-2)}}$ in \eqref{coreq1} and get
	\begin{align}
	\sum_{t=1}^T(\E_{p_{t,m}}\phi_{t,m}-\phi_{t,m_t})\leq&W_TD_T+{2.4\sqrt{W_TV_T}},
	\end{align}
	since $2(e-2)\leq 1.44$, which concludes the proof.
	\end{appendices}
	
\end{document}